\definecolor{mygray}{gray}{0.95}
\definecolor{mycyan}{HTML}{005397}
\definecolor{myred}{HTML}{E13333}
\definecolor{mymagenta}{HTML}{BF3E87}
\definecolor{mypurple}{HTML}{1B2278}
\definecolor{tearose}{HTML}{F584C5}
\definecolor{coral}{HTML}{F67088}
\definecolor{dodger_blue}{HTML}{3BA3EC}
\definecolor{domino}{HTML}{BC9F48}
\definecolor{domino}{HTML}{BC9F48}
\definecolor{domino}{HTML}{BC9F48}
\definecolor{catalina_blue}{HTML}{1C3168}
\definecolor{catalina_blue}{HTML}{1C3168}
\definecolor{catalina_blue}{HTML}{1C3168}
\definecolor{dark_scarlet}{HTML}{C63D52}
\definecolor{cerulean}{HTML}{0192A8}
\definecolor{tussock}{HTML}{C99E31}
\definecolor{p13}{HTML}{BFB5D7}
\definecolor{b14}{HTML}{BEA1A5}
\definecolor{y15}{HTML}{F0Cf61}
\definecolor{Merino}{HTML}{F3EEE3}
\newcolumntype{a}{>{\columncolor{p13}}l}
\crefname{ineq}{Inequality}{Inequalities}
\newcommand{\abs}[1]{\left\lvert#1\right\rvert}
\pgfplotsset{
  every axis/.append style = {thick},
  tick style = {thick,black},
  %
  /tikz/normal shift/.code 2 args = {%
    \pgftransformshift{%
        \pgfpointscale{#2}{\pgfplotspointouternormalvectorofticklabelaxis{#1}}%
    }%
  },%
  shift/.style = {
    tick align        = outside,
    scaled ticks      = false,
    enlargelimits     = false,
    ticklabel shift   = {#1},
    axis lines*       = left,
    xtick style       = {normal shift={x}{#1}},
    ytick style       = {normal shift={y}{#1}},
    x axis line style = {normal shift={x}{#1}},
    y axis line style = {normal shift={y}{#1}},
  },
  shift/.default = 10pt,
  shift3d/.style = {
    shift=#1,
    ztick style       = {normal shift={z}{#1}},
    z axis line style = {normal shift={z}{#1}},
  },
  shift3d/.default = 10pt,
}
\newcolumntype{H}{>{\setbox0=\hbox\bgroup}c<{\egroup}@{}}
\begin{document}

\title{Certified Robust Accuracy of Neural Networks Are Bounded due to Bayes Errors}

\author{Ruihan Zhang\and Jun Sun}

\authorrunning{R. Zhang \and J. Sun}

\institute{Singapore Management University, Singapore, Singapore\\
\email{rhzhang@smu.edu.sg}}

\maketitle

\begin{abstract}
    Adversarial examples pose a security threat to many critical systems built on neural networks. While certified training improves robustness, it also decreases accuracy noticeably. Despite various proposals for addressing this issue, the significant accuracy drop remains. More importantly, it is not clear whether there is a certain fundamental limit on achieving robustness whilst maintaining accuracy. In this work, we offer a novel perspective based on Bayes errors. By adopting Bayes error to robustness analysis, we investigate the limit of certified robust accuracy, taking into account data distribution uncertainties. We first show that the accuracy inevitably decreases in the pursuit of robustness due to changed Bayes error in the altered data distribution. Subsequently, we establish an upper bound for certified robust accuracy, considering the distribution of individual classes and their boundaries. Our theoretical results are empirically evaluated on real-world datasets and are shown to be consistent with the limited success of existing certified training results, \emph{e.g.}, for CIFAR10, our analysis results in an upper bound (of certified robust accuracy) of 67.49\%, meanwhile existing approaches are only able to increase it from 53.89\% in 2017 to 62.84\% in 2023.

\end{abstract}

\section{Introduction}
Neural networks have achieved remarkable success in various applications, including many security-critical systems such as self-driving cars~\cite{kurakin2018adversarial}, and face-recognition-based authentication systems~\cite{sharif2016accessorize}. Unfortunately, several security issues of neural networks have been discovered as well. Arguably the most notable one is the presence of adversarial examples. Adversarial examples are inputs that are carefully crafted by adding human imperceptible perturbation to normal inputs to trigger wrong predictions~\cite{kurakin2016adversarial}. Their existence poses a significant threat when the neural networks are deployed in security-critical scenarios. For example, adversarial examples can mislead road sign recognition systems of self-driving cars and cause accidents~\cite {kurakin2018adversarial}. 
The increasing adoption of machine learning in security-sensitive domains raises concerns about the robustness of these models against adversarial examples~\cite{papernot2016transferability}.

To defend against adversarial examples, various methods for improving a model's robustness have been proposed. Two main categories are adversarial training~\cite{bai2021recent,wong2020fast} and certified training~\cite{muller2022certified,shi2021fast}, both of which aim to improve a model's accuracy in the presence of adversarial examples. Adversarial training works by training the network with a mix of normal and adversarial examples, either pre-generated or generated during training. Methods in this category do not provide a formal robustness guarantee~\cite{zhang2019limitations}, leaving the system potentially vulnerable to new types of adversarial attacks~\cite{liu2019adaptiveface,tramer2020adaptive}.

In contrast, certified training aims to provide a formal guarantee of robustness. A method in this category typically incorporates robustness verification techniques~\cite{xu2020automatic} during training, \emph{i.e.}, they aim to find a valuation of network parameters such that the model is provably robust with respect to the training samples. However, they are found to reduce the model's accuracy significantly~\cite{chiang2020certified}. Recent studies have shown that state-of-the-art certified training can result in up to 60\% accuracy drop on CIFAR-10~\cite{lyu2021towards} (at vicinity size 8/255). This is unacceptable for many real-world applications. Although numerous researchers attempt to enhance certified training methods, there seems to be an invisible hurdle preventing them from achieving a level of accuracy similar to that of vanilla models. Despite attempts to explore it using the limit of certain abstraction domains~\cite{mirman2021fundamental}, in general, whether there is such a theoretical upper bound on certified robust accuracy or not remains an open problem.

In this work, we offer a novel perspective and argue that Bayes errors may be one of the reasons why there is such an invisible hurdle.  The Bayes error, in the context of statistics and machine learning, is a fundamental concept related to the inherent uncertainty in any classification system~\cite{ishida2022performance}. It represents the minimum error rate for any classifier on a given problem and is determined by the overlap in the probability distributions of the classes to be predicted~\cite{fukunaga1990introduction}. Thus, we study whether the Bayes errors put a limit on certified robust accuracy.
 
To understand how Bayes Error is relevant, we can consider it from the uncertainty in neural network learning. Most existing classifiers learn using a data set which gives a unique and certain label for each input~\cite{lecunmnist}. Yet, this may not be the case in reality. That is, not every input may have a 100\% certain label (due to reasons such as information loss during the picture-capturing process). Intuitively, we show a real-world example in \cref{fig:dog}. This image looks like a cat, while it is, in fact, also possible to be a dog. The point is that unless we know how this photo was taken, and there is no information loss during the photo taking, there may always be a certain level of uncertainty when we label the data. These uncertainties call for Bayes errors and actually bounds both vanilla and certified robust accuracy.

\begin{figure}[t]
    \centering
    \begin{subfigure}[t]{0.25\linewidth}
        \includegraphics[width=\linewidth]{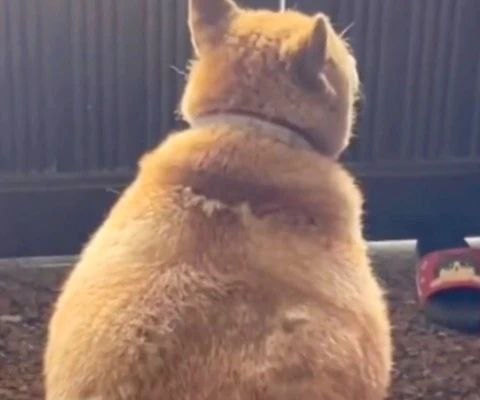}
        \caption{Is this a cat?}
        \label{fig:dogback}
    \end{subfigure}
    \hspace{1pt} 
    \begin{subfigure}[t]{0.25\linewidth}
        \includegraphics[width=\linewidth]{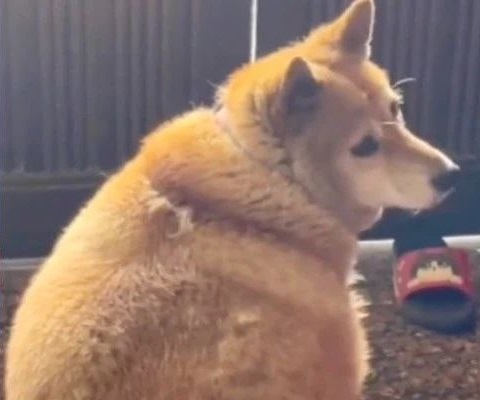}
        \caption{Eh, it's a dog.}
        \label{fig:dogside}
    \end{subfigure}
    \caption{The picture at left may look like a cat. In fact, it can be the back of a dog. }
    \label{fig:dog}
\end{figure}

This work has two objectives. First, we aim to analyse whether the quest for robustness inevitably decreases model accuracy, from the perspective of Bayes errors. This requires examining how the inherent, irreducible error in class probability distributions influences the robustness of classifiers (with respect to perturbations). We show that given the definition of robustness, the data distribution undergoes a convolution within the vicinity (\emph{i.e.}, the region around an input which is defined by the perturbation budget). Second, we intend to quantify this potential decrease, \emph{i.e.}, what are the upper bounds on the optimal certified robust accuracy? We show that such an upper bound can be derived independently from the classification algorithm. Through a detailed exploration of how each input may contribute to the Bayes errors, our study aims to enhance the understanding of their contribution to classification robustness.

We apply our analysis to multiple benchmark data sets and the corresponding models. From every data set, we observe that the convolved distribution has an increased Bayes error compared with the original distribution. This implies that pursuing robustness would in turn increase uncertainty, and decrease accuracy as we show in our analysis. Second, we contrast the state-of-the-art (SOTA) certified robust accuracy against the upper bound derived using our approach. This is to verify if the bound is empirically effective. We find that the bound is indeed higher than the state-of-the-art certified robust accuracy. We further investigate the relationship between the robustness upper bound and the perturbation vicinity size. When vicinity size grows, we expectedly obtain a decreased upper bound, on every data set used in our study.

\section{Preliminary and Problem Definition}
\label{sec:background}

In this section, we review the relevant background of this study, including the fundamentals of robustness in machine learning, \emph{e.g.}, its definition and verification. We also recall statistical decision theory, highlighting its relevance to classification. After that, we define our research problem.

In machine learning, the learner, denoted as a function $h: \mathbb{X} \to \mathbb{Y}$, is used to predict outputs $h(\bm{x})\in \mathbb{Y}$ based on a (possibly high dimensional) input point $\bm{x}\in\mathbb{X}$. The quality of $h$ can be measured by a problem-dependent loss function $\ell(h, \mathbf{x}, \textnormal{y})$~\cite{zhang2004statistical}. The choice of the loss function depends on the specific problem and data. Common options include the cross-entropy loss for classification and the mean squared error loss for regression. We focus on the classification problem in this work. Classification is the problem of assigning a class to each input~\cite{mohri2018foundations}, \emph{i.e.}, the learner’s task is to map an input to a discrete class and the learner is often called a classifier.
\begin{definition}[Classifier]
   In machine learning, a classifier maps an input $ \bm{x} $ from an input space $ \mathbb{X} $ to a discrete class $ y $ in the output space $ \mathbb{Y} $. The output space $ \mathbb{Y} $ is a (typically finite) set of discrete categories. Formally,
   \begin{equation}
       h:\bm{x}\mapsto\hat{y}, \quad  \bm{x} \in \mathbb{X}, \, \hat{y} \in \mathbb{Y}, \quad \mathbb{Y} = \set{\mathrm{class}_i\mid i\in\mathbb{Z}^+, i\le N_y}
   \end{equation}
where $N_y$ is the total number of categories the classifier can assign such that $\abs{\mathbb{Y}} = N_y$.
\end{definition}

For example, a spam classifier maps an email to $\set{\text{spam}, \text{non-spam}}$. The input vector for an email may embody the length of the message, the frequency of certain keywords in the body of the message, or the vectorised email body~\cite{mohri2018foundations}. A learning example contains an input and a label. A classifier can learn from labelled email examples and predict labels for other email examples. The classifier's predictions are then compared with the labels of the email under test to measure the performance of the classifier, \emph{e.g.}, a zero-one misclassification loss may be defined over $\set{\text{spam}, \text{non-spam}}\times\set{\text{spam}, \text{non-spam}}$ by $l(\hat{y}, y) = \bm{1}_{\hat{y}\neq y}$. A lower loss on the test sample set indicates a more accurate classifier.

\subsection{Robustness in Classification}
A classifier may not be robust as small changes in input data might lead to significant changes in the predictions made by a classifier~\cite{szegedy2013intriguing}. Consider the spam classifier example. Surprisingly, the removal of a single seemingly unimportant word from an email may switch the classifier's decision from spam to non-spam~\cite{wang2021crafting}. This phenomenon highlights the existence of adversarial examples, which are defined as follows.
\begin{definition}[Adversarial example~\cite{goodfellow2014explaining,kurakin2016adversarial}]
\label{def:ae}
    Given a classifier $h:\mathbb{X}\to\mathbb{Y}$ and an input-label pair $ (\bm{x}, y) \in \mathbb{X}\times\mathbb{Y} $, an adversarial example $ \bm{x'}\in\mathbb{X} $ is an input that is similar to $\bm{x}$ but is classified wrongly, \emph{e.g.}, $h(\bm{x'}) \neq y$. The difference between $ \bm{x} $ and $ \bm{x'} $ can be measured by a distance function $ d $, and we often require the distance between $\bm{x'}$ and $\bm{x}$ to be smaller than some threshold $\epsilon$. We assume $\bm{x}$ is correctly classified, \emph{i.e.}, $h(\bm{x}) = y$.
\end{definition}
Consider the case of the spam email. If a single word is removed, the Levenshtein distance (a measure of the number of edits needed to change one text into another) is 1. An adversarial example based on such a small change could be used with malicious intent. Even though removing a common word like `just' does not alter the nature of a spam email, it might be enough to prevent it from being detected by the spam classifier. Therefore, robustness against such attacks is needed such that spam would not evade detection by just changing a few words. Formally, robustness is defined as follows~\cite{lin2019robustness}.
\begin{definition}[Classifier robustness against perturbations]
\label{def:robustness}
    Given classifier $h$ and example $(\bm{x}, y)\in\mathbb{X}\times\mathbb{Y}$, we say that $h$ is robust with respect to vicinity $\set{\bm{x'}\mid d(\bm{x}, \bm{x'}) \le \epsilon}$, \emph{i.e.}, $\operatorname{Rob}\big(h, \bm{x}, y; (d, \epsilon)\big)$, only if the following condition is satisfied.
    \begin{equation}
    \label{eq:blind}
        \lnot \exists~ \bm{x'}\in \mathbb{X}.~ d(\bm{x}, \bm{x'}) \le \epsilon \land  h(\bm{x'})\neq y
    \end{equation}
\end{definition}
\cref{def:robustness} involves the concept of vicinity, which is a subset of the input space, \emph{i.e.} $\subset \mathbb{X}$. It is usually determined by an input and a budget for perturbation. For instance, give an input $\bm{x}$, we can define its vicinity as $\mathbb{V}_{\bm{x}} = \set{\bm{x'}| d(\bm{x}, \bm{x'}) \le \epsilon}$. However, this set representation may be inconvenient sometimes. Thus, we give an equivalent function form as follows.
    \begin{equation}
    \label{eq:vicinity}
        v_{\bm{x}}(\bm{x'}) = \begin{cases}
            \left(\int_{\mathbb{V}_{\bm{x}}}d\bm{x''}\right)^{-1}, &\text{if}~~ \bm{x'}\in \mathbb{V}_{\bm{x}}\\
            0, &\text{otherwise}
        \end{cases}
    \end{equation}
    Essentially, \cref{eq:vicinity} can be viewed as a probability density function uniformly defined over the vicinity around an input $\bm{x}$. Now we shift the x-coordinate by $\bm{x}$, we get 
    \begin{equation}
        v_{\bm{0}}(\bm{x'} - \bm{x}) = \begin{cases}
            \left(\int_{\mathbb{V}_{\bm{0}}}d\bm{x''}\right)^{-1}, &\text{if}~~ \bm{x'} - \bm{x}\in \mathbb{V}_{\bm{0}}\\
            0, &\text{otherwise}
        \end{cases}
    \end{equation}
    Assuming that the vicinity function is translation invariant, we can drop the subscript $\bm{0}$, and use a positive constant $\epsilon_\mathrm{v}$ to represent $\int_{\mathbb{V}_{\bm{0}}}d\bm{x''}$. Thus, the vicinity function $v: \mathbb{X}\to \set{0, \epsilon_\mathrm{v}^{-1}}$ can be expressed as
    \begin{equation}
        v(\bm{x}) = \begin{cases}
            \epsilon_\mathrm{v}^{-1} &\text{if}~~ \bm{x} \in \mathbb{V}_{\bm{0}}\\
            0, &\text{otherwise}
            \end{cases}
    \end{equation}
    Since these representations are equivalent, we choose either representation based on the contexts. An example of a one-dimensional input's vicinity is shown in \cref{fig:vicinity1}.

\begin{figure}[t]
    \centering
    \begin{subfigure}[t]{0.386\linewidth}
        \includegraphics[width=\linewidth]{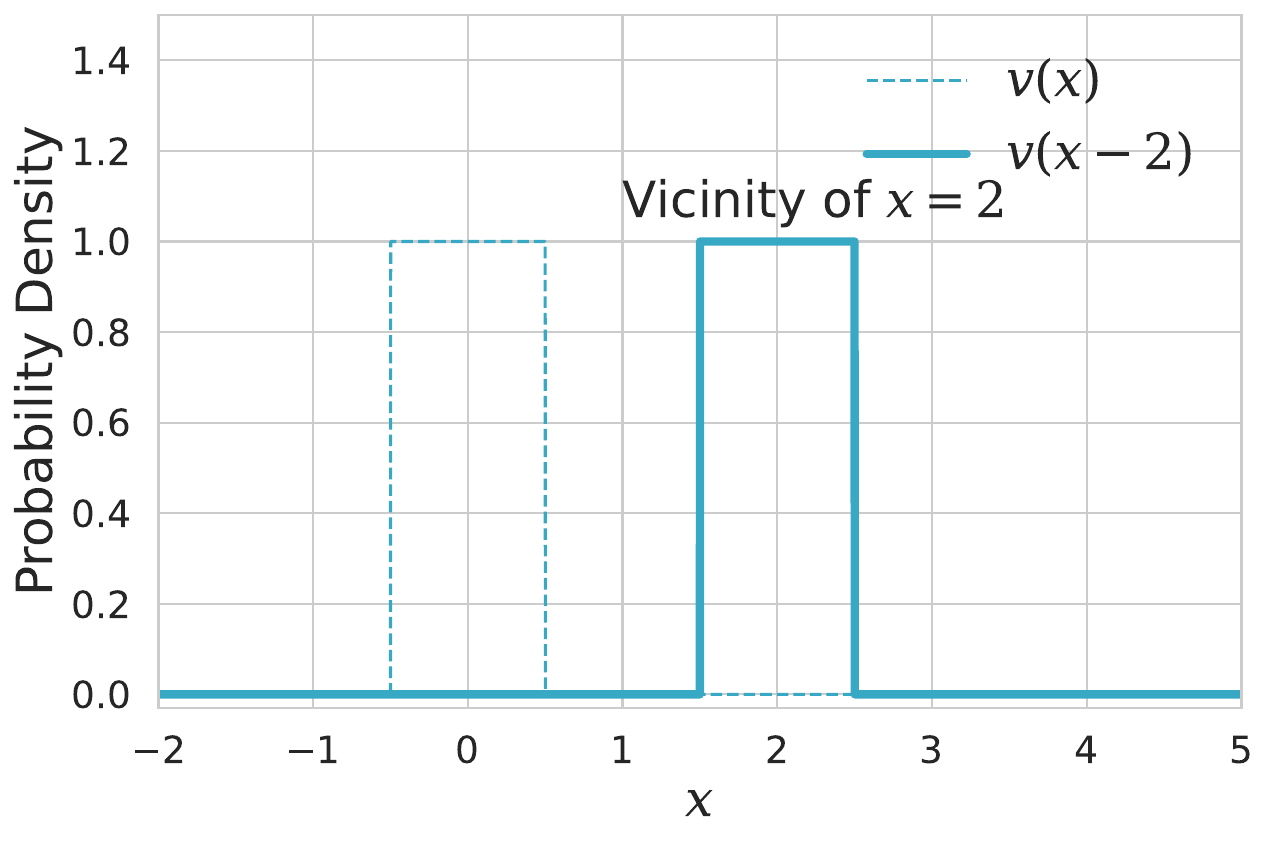}
        \caption{Vicinity function $v(\bm{x})$ is shown in blue ($\epsilon=0.5$). To get the vicinity at a specific input $\bm{x}=2$, we simply shift $v(\bm{x})$ along the positive direction of the x-axis by 2.}
        \label{fig:vicinity1}
    \end{subfigure}
    \hfill
    \begin{subfigure}[t]{0.59\linewidth}
        \includegraphics[width=\linewidth]{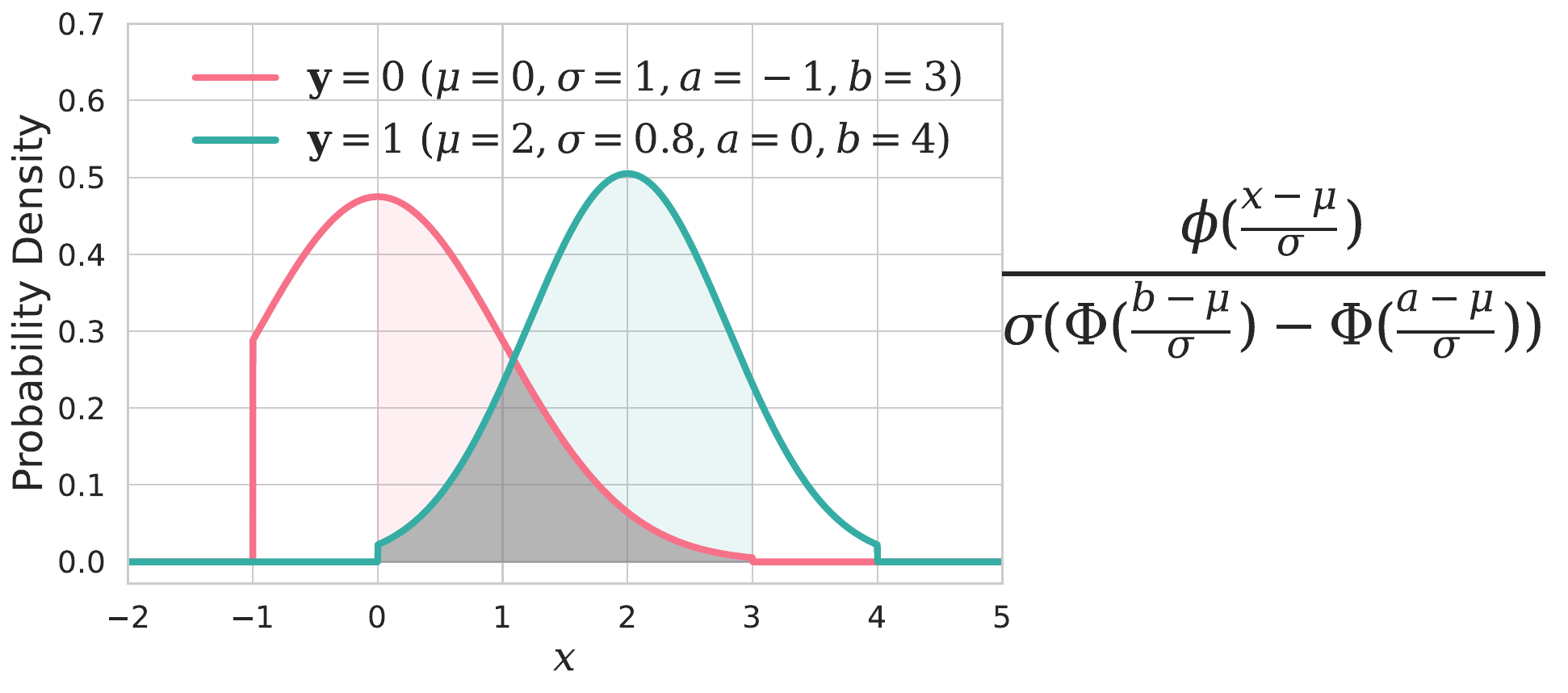}
        \caption{The grey-shaded area represents the Bayes error, characterising the overlap between the two distributions (truncated normal as expressed). The union of the light colour-shaded area and the grey-shaded area represents the proportion of inputs that \textit{have} uncertainty, \textit{i.e.}, $p(\mathbf{x}\in\mathbb{K}_\mathrm{D})$.}
        \label{fig:runex1}
    \end{subfigure}
    \caption{1D visualizations of vicinity function and Bayes error. A vicinity function is a rectangular function that returns a constant value if an input is in the vicinity. We use two PDFs of the truncated normal distribution to visualise the Bayes error.}
    \label{fig:pre}
\end{figure}

Achieving robustness is challenging. Verifying whether $\operatorname{Rob}\big(h, \bm{x}, y; \mathbb{V}_{\bm{x}}\big)$ holds for a given classifier $h$ is complicated since examining every example within a vicinity is impractical. Consequently, accurately estimating a classifier's robustness on specific inputs, as well as its robustness on a given data distribution, presents significant challenges. Existing methods for evaluating robustness include empirical evaluation (\emph{i.e.}, adversarial attacks)~\cite{ganin2016domain}, robustness verification~\cite{goodfellow2014explaining,li2023sok}, and others~\cite{weng2018evaluating}.

Adversarial attacks take one or more steps to search for adversarial examples within a vicinity. Let $\operatorname{AttS}\big(h, \bm{x}, y; \mathbb{V}{\bm{x}}\big)$ denote the \textbf{s}uccess of an \textbf{att}ack in finding adversarial examples in $\mathbb{V}_{\bm{x}}$. The failure rate of this attack on classifier $h$ can serve as an estimation for the classifier's expected robustness, as outlined below.
\begin{equation}
    \operatorname{AttS}\big(h, \bm{x}, y; \mathbb{V}{\bm{x}}\big) \to \lnot\operatorname{Rob}\big(h, \bm{x}, y; \mathbb{V}_{\bm{x}}\big)
\end{equation}

Another perspective contends that any non-zero rate of false negatives in the detection of adversarial examples is problematic. To this end, a condition $\operatorname{Vrob}$ is to be established, such that given a classifier $h$, it satisfies
\begin{equation}
\label{eq:incplt}
    \forall~(\bm{x}, y) \in \mathbb{X}\times\mathbb{Y}.~\operatorname{Vrob}\big(h, \bm{x}, y; \mathbb{V}_{\bm{x}}\big) \to \mathrm{Formula}~\ref{eq:blind}
\end{equation}
This method refers to robustness verification and the condition $\operatorname{Vrob}$ is the \textbf{v}erification result of \textbf{rob}ustness. There are two categories of robustness verification methods, \emph{i.e.}, incomplete deterministic verification~\cite{li2023sok} and complete deterministic verification~\cite{li2023sok}. Any deterministic verification method that fulfils Formula~\ref{eq:incplt} qualifies as an incomplete verification. If a method further fulfils Formula~\ref{eq:cmplt}, it qualifies as a complete verification. In both cases, if the verification result $\operatorname{Vrob}\big(h, \bm{x}, y; \mathbb{V}_{\bm{x}}\big)$ is \texttt{True}, \emph{i.e.}, verified, the classifier is considered to have deterministic robustness certification~\cite{li2023sok} for input $\bm{x}$ within the vicinity $\mathbb{V}_{\bm{x}}$, and the average certification likelihood is often called certified robust accuracy~\cite{shi2021fast}. Certified robust accuracy can serve as a lower bound for the classifier's expected robustness.
\begin{equation}
\label{eq:cmplt}
    \forall~(\bm{x}, y) \in \mathbb{X}\times\mathbb{Y}.~\operatorname{Vrob}\big(h, \bm{x}, y; \mathbb{V}_{\bm{x}}\big) \gets \operatorname{Rob}\big(h, \bm{x}, y; \mathbb{V}_{\bm{x}}\big)
\end{equation}
These verification methods can be used to optimise classifiers during training, and such a practice refers to certified training~\cite{li2023sok,vaishnavi2022accelerating}, which is defined as follows. 
\begin{equation}
\label{eq:cert_train}
    \min_h ~\operatorname{E}_{(\mathbf{x}, \textnormal{y})\sim D} \left[ \sup_{\bm{x'}\in \mathbb{V}(\mathbf{x}),~k\neq \textnormal{y}}\Big(\ell( h, \bm{x'}, \textnormal{y}) - \ell( h, \bm{x'}, k \Big)\right]
\end{equation}
Here, the neural network verification methods are used to soundly approximate the worst loss that can be induced by any perturbation within the vicinity of each training sample. However, after years of research~\cite{zhang2018efficient,singh2019abstract,balunovic2020adversarial}, certified training still faces challenges. Existing certified training methods often result in a significant drop in the model's  accuracy~\cite{cohen2019certified,raghunathan2018certified}. For instance, the best accuracy achieved by certified training is typically half of that of the standard training on the CIFAR-10 data set~\cite{tsipras2018robustness,shi2021fast}. Such a significantly reduced accuracy often means that the model is unacceptable in practice.

In summary, to evaluate whether $h$ attains robustness at example $(\bm{x}, y)$ within the vicinity $\mathbb{V}_{\bm{x}}$, existing methods include checking $\operatorname{AttS}\big(h, \bm{x}, y; \mathbb{V}{\bm{x}}\big)$ through adversarial attacks or $\operatorname{V}\big(h, \bm{x}, y; \mathbb{V}_{\bm{x}}\big)$ through robustness verification.
The expected robustness over a given distribution $D$, denoted by
\begin{equation}
\label{eq:expectedrobust}
    \operatorname{E}_{(\mathbf{x}, \textnormal{y}) \sim D}\left[\bm{1}_{\operatorname{Rob}\big(h, \mathbf{x}, \textnormal{y}; \mathbb{V}_{\mathbf{x}}\big)}\right]
\end{equation}
which can be overestimated by attack success rate ($\operatorname{E}_{(\mathbf{x}, \textnormal{y}) \sim D}\left[\bm{1}_{\operatorname{AttS}}\right]$) or underestimated by certified robust accuracy ($\operatorname{E}_{(\mathbf{x}, \textnormal{y}) \sim D}\left[\bm{1}_{\operatorname{Vrob}}\right]$). $\bm{1}_\mathrm{condition}$ is the indicator function that returns 1 if the condition is \texttt{True}, and 0 otherwise.

\subsection{Bayes Rules for Distributions}
In the following, we introduce the notion of Bayes Error and how it reflects a classification distribution. We consider a scenario where an input $ \mathbf{x} $ is to be classified into one class in $ \mathbb{Y} $, in particular, $ \textnormal{y} = k$ with prior class probability $P(\textnormal{y}=k)$ where $k\in \mathbb{Y}$. Let $ p(\mathbf{x} | \textnormal{y}=k) $ denote the class likelihood, that is, the conditional probability density of $ \mathbf{x} $ given that it belongs to class $ k$. The probability that the input $ \mathbf{x} $ belongs to a specific class $ k $, namely the posterior probability $ p(\textnormal{y}=k | \mathbf{x}) $, is given by Bayes' theorem.
\begin{equation}
\label{eq:bayestheorem}
    p(\textnormal{y}=k | \mathbf{x}) = \frac{p(\mathbf{x} | \textnormal{y}=k)P(\textnormal{y}=k)}{p(\mathbf{x})}
\end{equation}
where $p(\mathbf{x})$ is the probability density function of $\mathbf{x}$, \emph{i.e.}, $p(\mathbf{x}) = \sum_{k\in\mathbb{Y}} p(\mathbf{x}|\textnormal{y}=k) P(\textnormal{y}=k)$.
This classifier assigns an input $\mathbf{x}$ to the class with the highest posterior and is called the Bayes classifier, which is the optimal classifier. The classification error associated with the Bayes classifier is outlined as follows.
\begin{definition}[Bayes error]
Given a distribution $D$ over $\mathbb{X}\times\mathbb{Y}$, the error associated with the Bayes classifier is called the Bayes error (rate), denoted as $\beta_D$. The Bayes error can be expressed~\cite{fukunaga1990introduction,garber1988bounds} as: 
\begin{equation}
\label{eq:bayeserror1}
\begin{aligned}
    \beta_D &= \operatorname{E}_{(\mathbf{x}, \textnormal{y}) \sim D}\left[1 - \max_k p(\textnormal{y}=k|\mathbf{x})\right]\\ &= \int \left(1 - \max_k p(\textnormal{y}=k|\mathbf{x}=\bm{x})\right) p(\bm{x}) d \bm{x}
\end{aligned}
\end{equation}
Besides, since the Bayes classifier is optimal~\cite{ripley1996pattern}, this optimality gives rise to the following definition of the Bayes error~\cite{mohri2018foundations}.
\begin{equation}
\label{eq:bayeserror2}
    \beta_D = \min_{\textnormal{measurable}~ h} \operatorname{E}_{(\mathbf{x}, \textnormal{y}) \sim D} \left[\bm{1}_{h(\mathbf{x}) \neq \textnormal{y}}\right]       
\end{equation}
where the Bayes error is defined as the minimum of the errors achieved by measurable functions $h: \mathbb{X} \to \mathbb{Y}$. Hereby, (any) classifier $h$ with an error rate equal to $\beta_D$ can be called a Bayes classifier. 
\end{definition}

An example illustrating the Bayes error is given in \cref{fig:runex1}. The Bayes error fundamentally reflects the inherent uncertainty in classification tasks. It is the (irreducible) minimal error rate achievable by any classifier for a specific problem, influenced by the overlap amount among the class probability distributions. An input having a certain (deterministic) label can be formally expressed as $\max_k p(\textnormal{y}=k|\mathbf{x} = \bm{x}) = 1$. We can also represent this using the ceiling $\lceil \cdot \rceil$ or floor $\lfloor \cdot \rfloor$ function within the interval $[0, 1]$. Specifically, the ceiling function returns the smallest integer greater than or equal to the input. Consequently, it returns 1 for any number from 0 (exclusive) up to 1 and returns 0 if the input is 0. This shows that the input's label has uncertainty if $1 - \max_k p(\textnormal{y}=k|\mathbf{x} = \bm{x}) > 0$ and does not have uncertainty otherwise. We write $\mathbb{K}_D = \set{\bm{x}|1 - \max_k p(\textnormal{y}=k|\mathbf{x} = \bm{x}) > 0}$ to denote the set of every input whose label has uncertainty. The Bayes error provides a yardstick for other classifiers~\cite{ripley1996pattern,hastie2009elements}, \emph{e.g.}, a classifier may be deemed effective if its error rate approximates the Bayes error.

As highlighted in \cref{eq:bayestheorem} and (\ref{eq:bayeserror1}), the calculation of Bayes error is contingent upon knowing the prior distribution. In practical situations, since this distribution is not analytically known, the strategy is to estimate Bayes error using the observable portion of the distribution, \emph{e.g.}, training data characteristics, through approximations~\cite{hummels1987nonparametric,tumer1996estimating,yang2012discriminative,duvcinskas2015actual} or by computing its upper~\cite{fukunaga1990introduction,kang2004product,balagani2007relationship} and lower~\cite{antos1999lower,zhang2007gene,zhao2013beyond} bounds. \\

\noindent \textbf{Problem Definition} Next, we define the problem that we study. Despite the many proposals on certified training, noticeable suboptimality in robustness persists, especially compared with vanilla accuracy. Our objective is to ascertain whether this limitation comes from insufficient optimisation, or if there exists a fundamental upper bound that inherently limits the certified robust accuracy. Furthermore, if such an upper bound does exist, we aim to investigate how we can compute it, and how we can validate our result.

\section{An Upper Bound of Robustness from Bayes Error} \label{sec:method}
In this section, we present a method that attempts to address our research problem defined above, from a Bayes error perspective. Particularly, we hypothesise that the Bayes error plays a vital role in estimating the robustness that can be achieved by any classifier. First, we prove that certified training increases the Bayes error, which poses an upper bound on the robustness that can be achieved by any classifier. Second, we present how the upper bounds of certified robust accuracy can be calculated from a given distribution.

\subsection{Certified training increases Bayes error}
\label{sec:conv}
Certified robustness can be viewed as a way of optimizing the classifier with an altered data distribution instead of the original distribution~\cite{pang2022robustness}. This is because due to the requirement of robustness, an input may be forced with a label of some of its neighbors in the vicinity, instead of its original label. In the following, we investigate how the robustness requirement influences the data distribution, further affecting the Bayes error. We hypothesise that the altered distribution worsens Bayes error. We begin by defining a ``label-assignment'' action that alters a distribution, from a local perspective.

\begin{figure}[t]
    \centering
    \begin{subfigure}[t]{0.3\linewidth}
        \includegraphics[width=\linewidth]{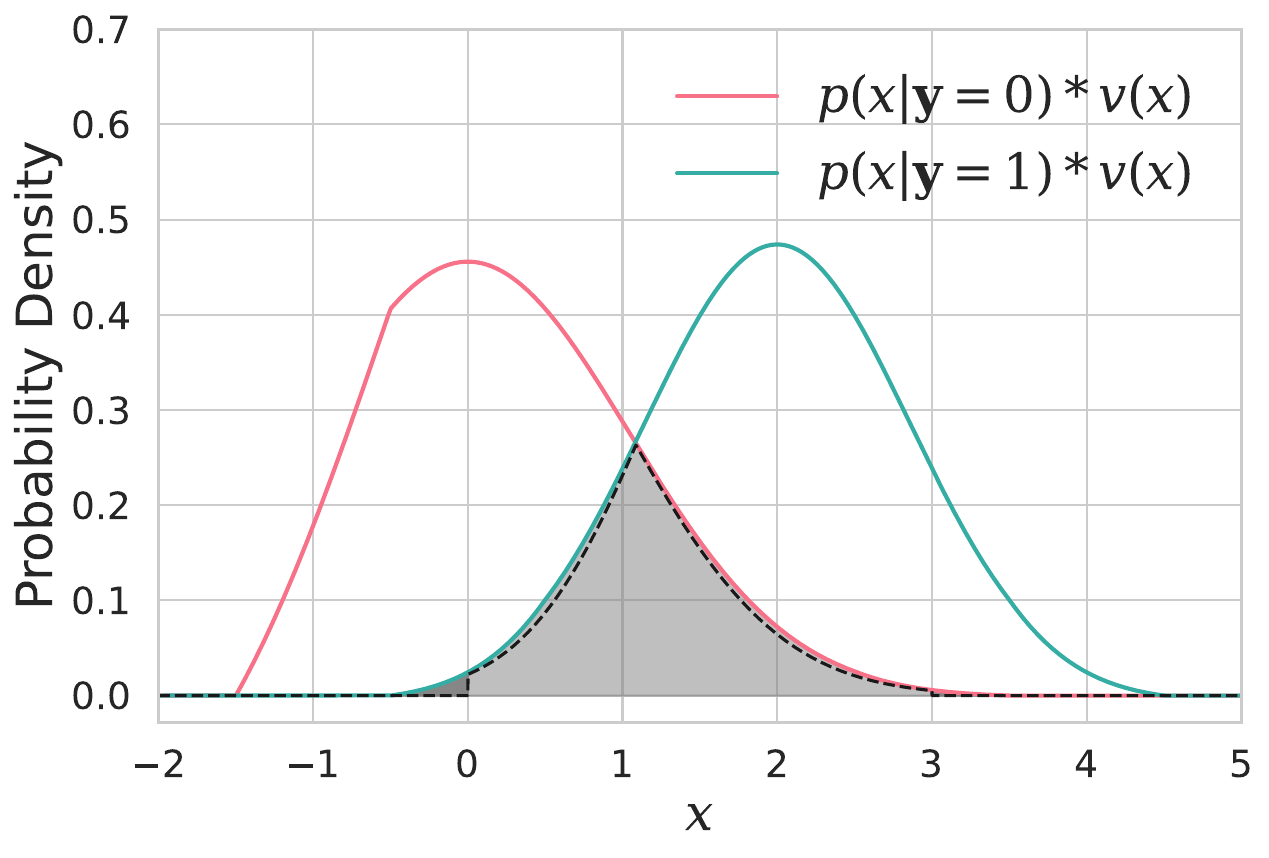}
        \caption{Convolution of distributions from Fig. \ref{fig:runex1} with the vicinity function from Fig. \ref{fig:vicinity1}, resulting in flatter and smoother distributions with increased shaded areas.}
        \label{fig:runex_conv}
    \end{subfigure}
    \hfill
    \begin{subfigure}[t]{0.3\linewidth}
        \includegraphics[width=\linewidth]{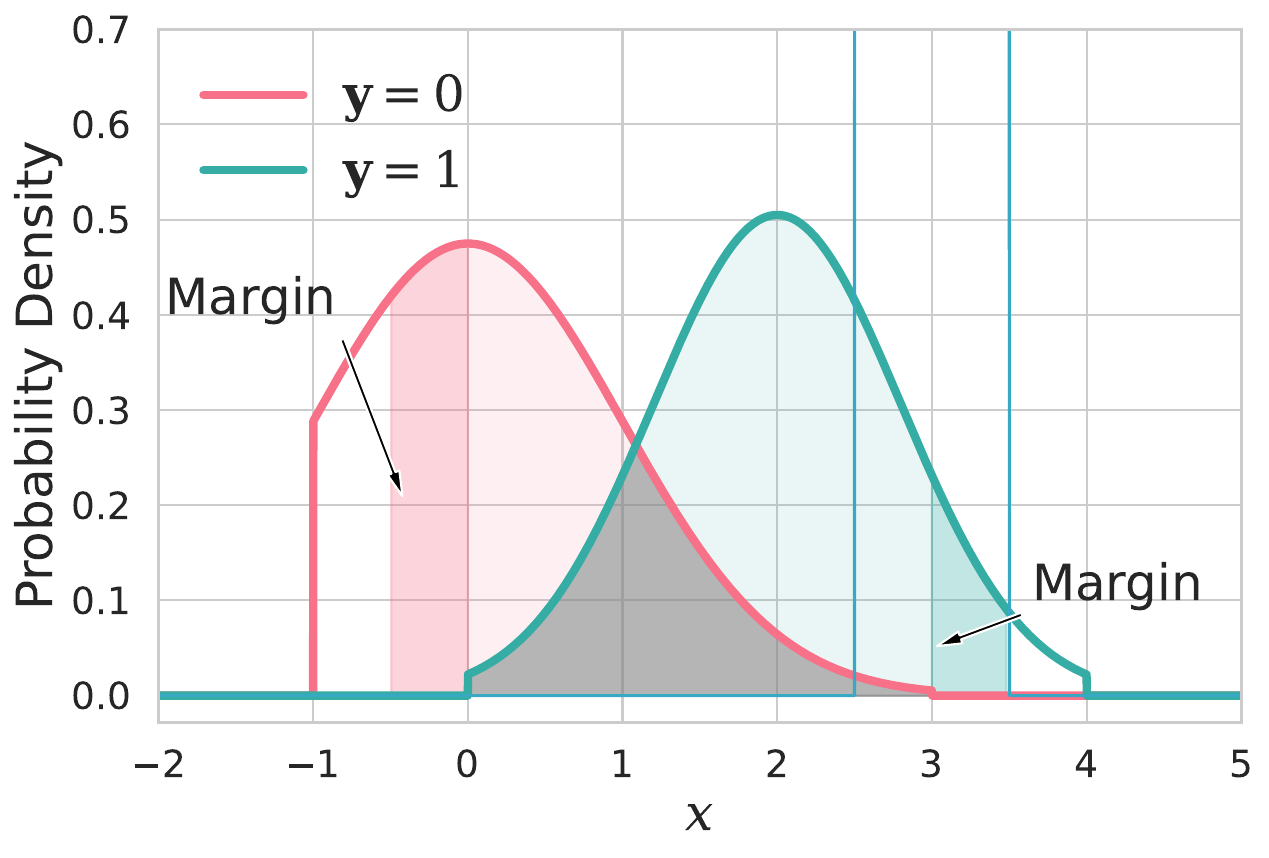}
        \caption{Demonstration of the margin contributing to $\zeta^\sharp_D$. In a 1D problem, the width of $\mathbb{K}^*_D$ is essentially the vicinity width.}
        \label{fig:peripheral}
    \end{subfigure}
    \hfill
    \begin{subfigure}[t]{0.3\linewidth}
        \includegraphics[width=\linewidth]{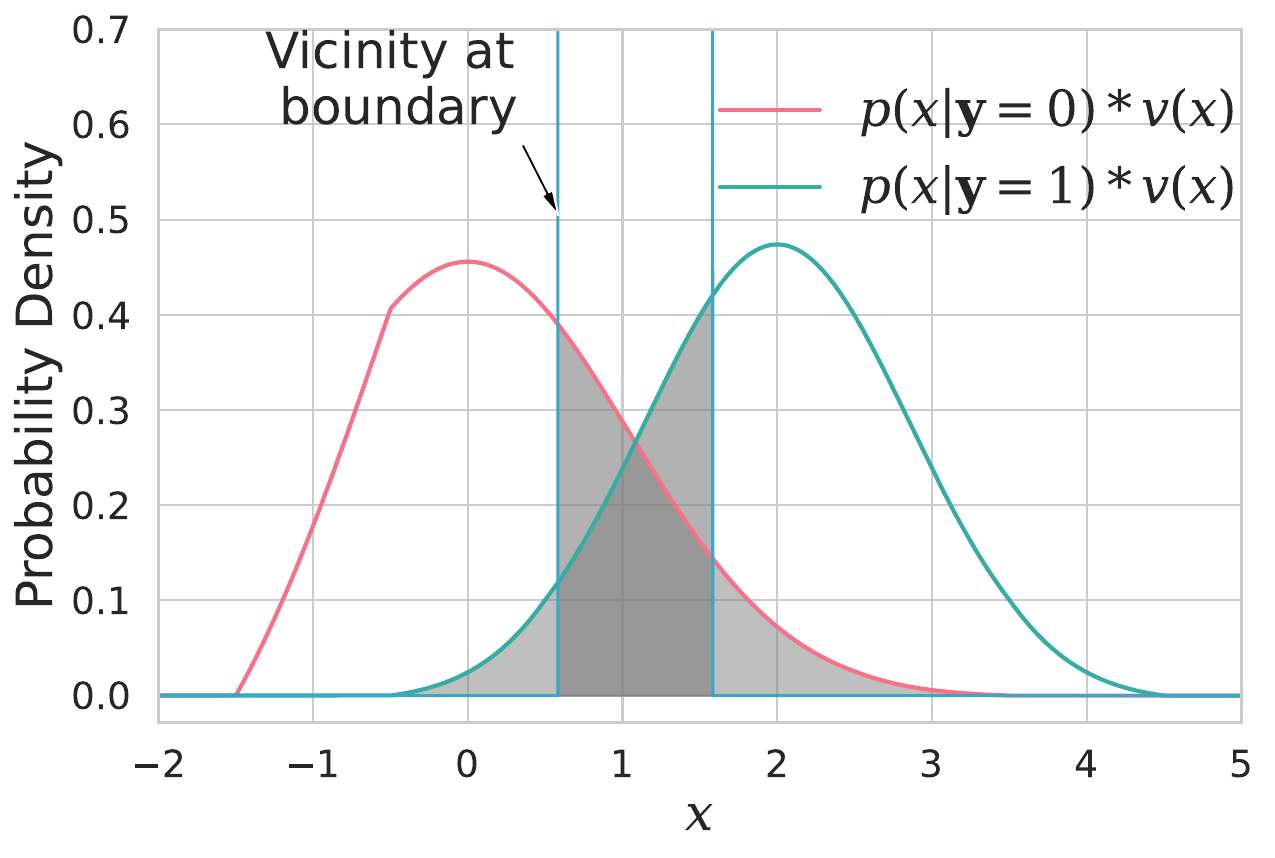}
        \caption{Union of shaded areas indicating the lower bound of irreducible robustness error and upper bound of certified robust accuracy.}
        \label{fig:ub_compute}
    \end{subfigure}
    \caption{Visualizing the convolution of distributions, the marginal contribution to Bayes error, and the bounds of robustness error and certified robust accuracy.}
    \label{fig:three_subfigures}
\end{figure}

    Suppose there is a distribution $D$ over the space $\mathbb{X}\times\mathbb{Y}$. From a local (example) perspective, an example $(\bm{x}, y)\in \mathbb{X}\times\mathbb{Y}$ assigns its label to a specific domain $\mathbb{S}\subset\mathbb{X}$ ($\mathbb{S}$ can be a vicinity) by directly altering the joint probability in $\mathbb{S}$. Specifically, this alteration is a process that adds $\Delta p(\textnormal{y}=y,\mathbf{x}=\bm{x})$ to the original $p(\textnormal{y}=y,\mathbf{x}=\bm{x})$, and adds $\Delta p(\textnormal{y}=y,\mathbf{x}=\bm{x'})$ to the original $p(\textnormal{y}=y,\mathbf{x}=\bm{x'})$ for any example $\bm{x'}\in\mathbb{S}$, where $\Delta p$ denotes a change in the joint distribution function (of $\mathbf{x}, \textnormal{y}$) such that

    \begin{equation}
    \label{eq:assign}
    \begin{aligned}
        \Delta p(\textnormal{y}=k,\mathbf{x}=\bm{x}) &= \begin{cases}
            0, &\text{if}~~ k \neq y\\
            p_\mathrm{ori}(\textnormal{y}=k,\mathbf{x}=\bm{x})\left(\frac{1}{\int_\mathbb{S}d\bm{x'}} - 1\right), &\text{otherwise}
        \end{cases} \\
        \Delta p(\textnormal{y}=k,\mathbf{x}=\bm{x'}) &= \begin{cases}
            0, &\text{if}~~ k \neq y\\
           \frac{1}{\int_\mathbb{S}d\bm{x'}} p_\mathrm{ori}(\textnormal{y}=k,\mathbf{x}=\bm{x}), &\text{otherwise}
        \end{cases}
    \end{aligned}
    \end{equation}
We then explain why this label assignment aligns with the robustness criteria. In the context of robustness, for every input in the training set, every neighbour point in the vicinity around the input gets the label of this input. Meanwhile, an input may fall within more than one vicinity. Thus, an input gets labels assigned from multiple neighbours, and each label's influence depends on its source's joint probability. Intuitively, examples with higher joint probability have a stronger influence on its vicinity.

\cref{eq:assign} captures the effect of an input's label on its neighbours, from an individual input perspective. We next set out from a distributional perspective which is supposed to match our individual input perspective. When all examples in the original distribution concurrently assign labels to their respective vicinity, the effect is equivalent to convolving this given distribution with the vicinity (function). This convolved distribution represents the target of certified robustness optimization, as captured by \cref{thm:conv}.
\begin{theorem}
\label{thm:conv}
    Given a distribution $D$ for classification, optimising for higher certified robustness does not optimise the classifiers to fit $D$. Rather, it optimises classifiers towards $D*v$, \emph{i.e.}, convolved distribution between $D$ and vicinity $v(\bm{x})$.
\end{theorem}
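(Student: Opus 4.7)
The plan is to show that the per-example label assignment in \cref{eq:assign}, when applied concurrently across the whole of $D$, reshapes the joint density $p(\mathbf{x}, \textnormal{y})$ into the convolved density $p_{\mathrm{ori}} * v$, and then to connect this altered distribution back to the certified training objective \cref{eq:cert_train}. I would proceed in three steps.

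First, I would fix a target point $\bm{x'}$ and label $y$, and aggregate the contributions prescribed by \cref{eq:assign} over every source $\bm{x}\in\mathbb{X}$, taking $\mathbb{S}=\mathbb{V}_{\bm{x}}$. For $\bm{x}\neq\bm{x'}$ the second case adds $p_{\mathrm{ori}}(\bm{x}, y)/\epsilon_{\mathrm{v}}$ to the new density at $(\bm{x'}, y)$ whenever $\bm{x'}\in\mathbb{V}_{\bm{x}}$, while the single source $\bm{x}=\bm{x'}$ invokes the first case, whose $-p_{\mathrm{ori}}(\bm{x'}, y)$ component cancels the pre-existing baseline $p_{\mathrm{ori}}(\bm{x'}, y)$ so that the original density does not survive in the aggregate. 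Combining these contributions should leave
\begin{equation*}
    p_{\mathrm{new}}(\mathbf{x}=\bm{x'}, \textnormal{y}=y)
    \;=\; \int_{\mathbb{X}} \frac{p_{\mathrm{ori}}(\mathbf{x}=\bm{x}, \textnormal{y}=y)}{\epsilon_{\mathrm{v}}}\,\bm{1}_{\bm{x'}\in\mathbb{V}_{\bm{x}}}\,d\bm{x}.
\end{equation*}

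Second, I would use the translation invariance of the vicinity to identify this integral with a convolution. Because $\bm{x'}\in\mathbb{V}_{\bm{x}}$ iff $\bm{x'}-\bm{x}\in\mathbb{V}_{\bm{0}}$, and $\bm{1}_{\bm{x'}-\bm{x}\in\mathbb{V}_{\bm{0}}}/\epsilon_{\mathrm{v}} = v(\bm{x'}-\bm{x})$, the integrand rewrites as $p_{\mathrm{ori}}(\bm{x}, y)\,v(\bm{x'}-\bm{x})$ and the integral equals $(p_{\mathrm{ori}}(\cdot, y) * v)(\bm{x'})$, which is by definition the joint density of $D*v$. To conclude, I would connect this altered density back to \cref{eq:cert_train}: the inner supremum forces $h$ to predict $\textnormal{y}$ on the whole of $\mathbb{V}_{\mathbf{x}}$, and its vicinity-averaged relaxation satisfies, by Fubini,
\begin{equation*}
    \operatorname{E}_{(\mathbf{x}, \textnormal{y})\sim D}\!\left[\int \ell(h, \bm{x'}, \textnormal{y})\,v(\bm{x'}-\mathbf{x})\,d\bm{x'}\right] \;=\; \operatorname{E}_{(\mathbf{x'}, \textnormal{y})\sim D*v}\!\bigl[\ell(h, \mathbf{x'}, \textnormal{y})\bigr],
\end{equation*}
so certified training drives $h$ towards the Bayes classifier of $D*v$ rather than that of $D$.

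The main obstacle is the book-keeping in the aggregation step: a direct reading of \cref{eq:assign} would simply sum the second case over all sources and add the result to the existing $p_{\mathrm{ori}}(\bm{x'}, y)$, producing $p_{\mathrm{ori}} + p_{\mathrm{ori}}*v$ instead of $p_{\mathrm{ori}}*v$, and it is precisely the $-1$ inside the first case that subtracts this baseline and reconciles the local and distributional perspectives. A secondary subtlety is that the identification $\bm{1}_{\bm{x'}\in\mathbb{V}_{\bm{x}}}/\epsilon_{\mathrm{v}} = v(\bm{x'}-\bm{x})$ tacitly relies on $\mathbb{V}_{\bm{0}}$ being symmetric around the origin (as it is for the usual $L_p$-ball vicinity), so that no sign flip is introduced in the convolution argument.
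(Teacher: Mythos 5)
Your argument reaches the right conclusion but by a genuinely different route from the paper. The paper's proof never touches \cref{eq:assign}: it starts from the robustness objective itself, writes the per-example robustness indicator as $\lfloor\int_\mathbb{X} v(\bm{x}-\bm{x'})\cdot\bm{1}_{k=h(\bm{x'})}\,d\bm{x'}\rfloor$, recognises the convolution between $v$ and the prediction-indicator function, and then appeals to commutativity/associativity of convolution to transfer $v$ onto the data density $p(\bm{x},k)$, concluding that the optimisation target is $D*v$. You instead formalise the label-assignment action of \cref{eq:assign}: aggregating the per-source increments over all sources (reading the sum over a continuum of sources as an integral of density contributions, which is the only sensible reading of that equation) and letting the $-1$ term cancel the baseline, you obtain $p_{\mathrm{new}}(\cdot,y)=p_{\mathrm{ori}}(\cdot,y)*v$ directly, and then tie this back to \cref{eq:cert_train} via a Fubini change of measure on the vicinity-averaged loss. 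What your route buys is an explicit verification of the claim the paper only asserts in prose (that simultaneous label assignment is equivalent to convolving the distribution with the vicinity), plus a clean identity $\operatorname{E}_{D}[\int \ell\, v]=\operatorname{E}_{D*v}[\ell]$; what it costs is that the Fubini step silently relaxes the worst-case requirement of certified robustness (the supremum in \cref{eq:cert_train}, or equivalently the floor in the paper's objective) to a vicinity average. The paper's own proof makes an equivalent leap when it moves the convolution past the nonlinear $\lfloor\cdot\rfloor$ by invoking commutativity and associativity, so neither argument is tighter at that junction, and yours is arguably more explicit about where the relaxation happens. Your closing caveats on the $-1$ bookkeeping and on translation invariance/symmetry of $\mathbb{V}_{\bm{0}}$ are consistent with the paper's own assumptions (its appendix explicitly relies on $v$ being an even function).
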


\begin{proof}
    The objective of robustness optimises classifiers to satisfy Formula~(\ref{eq:blind}). Therefore, the objective is to maximise
    \begin{equation}
    \begin{aligned}
        \operatorname{E}_{(\mathbf{x},\mathbf{y})\sim D} &\left[\lfloor\int_\mathbb{X} v (\mathbf{x} - \bm{x'})\cdot\bm{1}_{ \mathbf{y} = h(\bm{x'}) } d \bm{x'}\rfloor\right]\\
        = \sum_k\int &\lfloor\int_\mathbb{X} v (\bm{x} - \bm{x'})\cdot\bm{1}_{ k = h(\bm{x'}) } d \bm{x'}\rfloor p(\bm{x},k) d \bm{x}\\
        = \int \sum_k&\lfloor\int_\mathbb{X} v (\bm{x} - \bm{x'})\cdot\bm{1}_{ k = h(\bm{x'}) } d \bm{x'}\rfloor p(\bm{x},k) d \bm{x}
    \end{aligned}
    \end{equation}
Here, we can see the convolution form, and according to the commutativity and associativity of convolution. We can thus obtain that the target distribution of certified training is indeed the convolved distribution of the given one.
\qed
\end{proof}

Note that \cref{thm:conv} is not particularly for existing certified training approaches but rather any approach to achieving certified robustness. Hereafter, we use $D$ to denote the original distribution, $p$ to denote the conditional distributions (of each class) from $D$, $D'$ to denote the convolved distribution, and $q$ to denote the conditional distributions (of each class) from $D'$. Thus, the Bayes error of $D'$ can be expressed as
\begin{equation}
    \beta_{D'} = \operatorname{E}_{(\mathbf{x}, \textnormal{y}) \sim D'}\left[1 - \max_k q(\textnormal{y}=k|\mathbf{x})\right]
\end{equation}
The subsequent question is to study the Bayes error with respect to the convolved distribution $D'$. \cref{thm:bayesgrow} suggests that Bayes error grows when $D$ is transformed into $D'$, as illustrated in \cref{fig:runex_conv}.

\begin{theorem}
\label{thm:bayesgrow}
    Given a distribution $D$ for classification, its convolved distribution $D'$ has an equal or larger Bayes error, \emph{i.e.}, $\beta_{D} \le \beta_{D'}$.
\end{theorem}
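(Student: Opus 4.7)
The plan is to rewrite the Bayes error in its joint-distribution form so that the convolution can be pushed through cleanly. Using \cref{eq:bayeserror1} together with $p(\bm{x}, \textnormal{y}=k) = p(\textnormal{y}=k|\bm{x})\,p(\bm{x})$, I would first observe that
\begin{equation}
    \beta_D \;=\; 1 - \int \max_k p(\bm{x},\textnormal{y}=k)\, d\bm{x},
\end{equation}
and similarly $\beta_{D'} = 1 - \int \max_k q(\bm{x},\textnormal{y}=k)\, d\bm{x}$, where by \cref{thm:conv} the convolved joint density satisfies $q(\bm{x},\textnormal{y}=k) = \int p(\bm{x'},\textnormal{y}=k)\,v(\bm{x}-\bm{x'})\,d\bm{x'}$ for every class $k$. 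Hence proving $\beta_D \le \beta_{D'}$ reduces to showing the reverse inequality on the ``max-mass'' functionals, namely
\begin{equation}
    \int \max_k q(\bm{x},\textnormal{y}=k)\, d\bm{x} \;\le\; \int \max_k p(\bm{x'},\textnormal{y}=k)\, d\bm{x'}.
\end{equation}

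The central step is an exchange of $\max$ and the convolution integral. Since $v(\cdot)\ge 0$, for any fixed $\bm{x}$ the pointwise bound $\max_k \int p(\bm{x'},\textnormal{y}=k)\,v(\bm{x}-\bm{x'})\,d\bm{x'} \le \int \max_k p(\bm{x'},\textnormal{y}=k)\,v(\bm{x}-\bm{x'})\,d\bm{x'}$ holds because taking the maximum inside a non-negatively weighted integral can only increase the value (this is essentially convexity of $\max$). Integrating over $\bm{x}$ and applying Tonelli's theorem to swap the order of integration gives
\begin{equation}
    \int \max_k q(\bm{x},\textnormal{y}=k)\, d\bm{x} \;\le\; \int \max_k p(\bm{x'},\textnormal{y}=k) \left(\int v(\bm{x}-\bm{x'})\, d\bm{x}\right) d\bm{x'}.
\end{equation}
The inner integral equals $1$ because $v$ is a probability density by construction (\cref{eq:vicinity}, together with translation invariance), so the right-hand side collapses to $\int \max_k p(\bm{x'},\textnormal{y}=k)\, d\bm{x'}$, yielding $\beta_{D'} \ge \beta_D$.

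The only step with any real content is the max-integral exchange, and I expect it to be the main obstacle only in the sense of needing to state it carefully: it is not Jensen's inequality on $\max$ in the usual random-variable sense but rather the observation that $\max_k \operatorname{E}[f_k] \le \operatorname{E}[\max_k f_k]$ with respect to the probability measure induced by $v(\bm{x}-\cdot)$. Everything else (rewriting Bayes error in joint form, and using $\int v = 1$ with Tonelli) is mechanical. I would also briefly note that this argument is measure-theoretic and requires only that $v$ is a nonnegative integrable kernel normalised to one, so the bound is tight when the conditional densities $p(\bm{x}\mid\textnormal{y}=k)$ already have disjoint supports separated by more than the vicinity radius, which matches the intuition conveyed by \cref{fig:runex_conv}.
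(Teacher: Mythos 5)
Your proof is correct and follows essentially the same route as the paper's: the key step in both is the pointwise exchange $\max_k \int p(\cdot,k)\,v \le \int \max_k p(\cdot,k)\,v$ (convolution of the max dominates the max of the convolutions), followed by the observation that convolving with the probability density $v$ preserves total mass, so the ``max-mass'' integral can only shrink. Your version is in fact slightly tidier than the paper's, since you work with joint densities (so the class priors are handled automatically) and make the Tonelli step and $\int v = 1$ explicit, but the underlying argument is identical.
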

\begin{proof}
    Consider $D$ is a distribution of random variables $\mathbf{x}$ and $\textnormal{y}$. Let $p_k(\bm{x})$ be the conditional distribution of $\mathbf{x}$ given $\textnormal{y}=k$. We need to prove that the Bayes error between $p_k$ is less than or equal to the Bayes error between $p_k*v$, where $v$ is a probability density function (PDF). First, let us prove that $((\max_k(p_k))*v)(\bm{x}) \ge \max_k ((p_k * v)(\bm{x}))$. Expanding both sides, we get $\int \max_k(p_k(\bm{x} - \bm{x'})v(\bm{x'}))d\bm{x'} $ at left. We get $\max_k(\int p_k(\bm{x} - \bm{x'})v(\bm{x'})d\bm{x'})$ at right. We can see that left $\ge\int p_k(\bm{x} - \bm{x'})v(\bm{x'})d\bm{x'} $ for any $k$. Therefore, the maximum can be brought out from the integral and thus the left side is proved to be greater than or equal to the right side. Then, we use the equality that the integral of $((\max_k p_k)*v)(\bm{x})$ is actually the same as the integral of $(\max_k p_k)(\bm{x})$. This is because $v$ itself is a PDF. Therefore, we get 
    \begin{equation}
        \int \left(1 - (\max_k p_k)(\bm{x})\right)d\bm{x} \le \int \left(1  - \max_k ((p_k * v)(\bm{x}))\right)d\bm{x}
    \end{equation}\qed
\end{proof}

Intuitively, when robustness is required, new labels are assigned to data in the vicinity of the training inputs. But, these new labels sometimes contradict the original labels or contradict themselves. As a result, the convolved distribution invariably exhibits larger uncertainty, represented by an increased Bayes error. For instance, let us consider a separable distribution with a unique boundary. The condition $d(\bm{x}, \bm{x'})\le\epsilon$ implies that $\bm{x}$ and $\bm{x'}$, near the boundary, should be assigned the same label even if their ground-truth labels are different, leading to a non-zero Bayes error.

\subsection{Irreducible robustness error and robustness upper bound}
\label{sec:bound}

We have proved that the optimal robustness is equal to or lower than the optimal accuracy. We now would like to find a quantitative upper bound for robustness. We first define the irreducible expected error rate across all classifiers regarding robustness, as expressed in \cref{eq:robustinf} where $\zeta^\sharp_D$ represents the irreducible robustness error on distribution $D$.
\begin{equation}
\label{eq:robustinf}
\begin{aligned}
    \zeta^\sharp_D = &\inf_{\text{measurable}~h} \operatorname{E}_{(\mathbf{x}, \textnormal{y}) \sim D}\left[1 -\bm{1}_{\operatorname{Rob}\big(h, \mathbf{x}, \textnormal{y}; \mathbb{V}_{\mathbf{x}}\big)}\right] 
\end{aligned}
\end{equation}
This concept is analogous to \cref{eq:bayeserror2}, where the Bayes error is described as the irreducible vanilla error rate achievable by any classifier. Then, the upper bound of expected robustness is 1 minus the lower bound of $\zeta^\sharp_D$.

Recall \cref{def:robustness}, the condition $\operatorname{Rob} (h, \bm{x}, y; (d, \epsilon) )$ holds only if Formula~\ref{eq:blind} is met, where $(\bm{x}, y)\in\mathbb{X}\times\mathbb{Y}$. Nevertheless, Formula~\ref{eq:blind} alone is not a sufficient condition for $\operatorname{Rob}\big(h, \bm{x}, y; (d, \epsilon)\big)$. According to \cref{def:ae}, $\operatorname{Rob}\big(h, \bm{x}, y; (d, \epsilon)\big)$ also requires that no input in the vicinity of $\bm{x}$ should be classified incorrectly, as expressed in Formula~\ref{eq:correct}. Formally, $\operatorname{Rob}\big(h, \bm{x}, y; (d, \epsilon)\big) \Longleftrightarrow \mathrm{Formula}~\ref{eq:blind}\land \mathrm{Formula}~\ref{eq:correct}$.
\begin{equation}
\label{eq:correct}
    \lnot \exists~ (\bm{x'}, y') \in \mathbb{X}\times\mathbb{Y}.~ p(\bm{x'}, y') > 0 ~\land~ d(\bm{x}, \bm{x'}) \le \epsilon ~\land~ h(\bm{x'})\neq y'
\end{equation}
\cref{eq:correct} suggests that if the ground-truth labels of inputs in the vicinity of $\bm{x}$ are different from the labels of $\bm{x}$, then a prerequisite of robustness is missing such that robustness cannot be attained. The conjunction of Formula~\ref{eq:blind} and \ref{eq:correct} clarifies that robustness asks for general correctness across the (local) input domain, rather than just local consistency. From this conjunction, we can derive that for a classifier to attain robustness at an input, it is necessary that the posterior probability associated with this input is entirely certain, which is formally captured in \cref{thm:certainrobust}. Further, given a distribution, the proportion of examples with uncertain labels can serve as a lower bound for the proportion of examples without robustness.
\begin{theorem}
\label{thm:certainrobust}
    Given a distribution $D$ over $\mathbb{X}\times\mathbb{Y}$, the irreducible robustness error is greater than or equal to the probability that an input is in $\mathbb{K}_D$.
    \begin{equation}
    \label[ineq]{ineq:certainrobust}
        \zeta^\sharp_D \ge \int \lceil1 -\max_k p(\textnormal{y}=k|\mathbf{x}=\bm{x})\rceil ~p(\bm{x}) d \bm{x} \ge \beta_D
    \end{equation}    
    When $ \lceil1-\max_k p(\textnormal{y}=k|\mathbf{x}=\bm{x})\rceil = 0 $, there is one and only one class has a posterior probability of 1 at input $\bm{x}$, resulting in a non-zero contribution to the Bayes error.
\end{theorem}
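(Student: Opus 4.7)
The plan is to prove the two inequalities in \cref{ineq:certainrobust} separately, with the equivalence $\operatorname{Rob}(h, \bm{x}, y; (d, \epsilon)) \Longleftrightarrow \mathrm{Formula}~\ref{eq:blind} \land \mathrm{Formula}~\ref{eq:correct}$ stated just above the theorem doing the heavy lifting. The key observation is that the second conjunct, which demands correctness against every co-occurring label inside the vicinity, converts label uncertainty at $\bm{x}$ into a hard obstacle to robustness regardless of how clever the classifier is.

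For the left inequality I would fix an arbitrary measurable $h$ and argue pointwise over $\bm{x}$. Whenever $\bm{x} \in \mathbb{K}_D$, the definition of $\mathbb{K}_D$ produces two labels $y_1 \ne y_2$ with $p(\textnormal{y}=y_i|\mathbf{x}=\bm{x}) > 0$, so the joint densities $p(\bm{x}, y_i)$ are both positive. Taking $\bm{x'} = \bm{x}$, which lies in $\mathbb{V}_{\bm{x}}$ since $d(\bm{x}, \bm{x}) = 0 \le \epsilon$, and picking whichever of $\{y_1, y_2\}$ differs from $h(\bm{x})$ as the witness $y'$, Formula~\ref{eq:correct} is seen to fail, so $\bm{1}_{\operatorname{Rob}(h, \mathbf{x}, \textnormal{y}; \mathbb{V}_{\mathbf{x}})} = 0$ on the entire event $\{\mathbf{x} \in \mathbb{K}_D\}$ no matter which $\textnormal{y}$ is sampled. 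Taking $\operatorname{E}_{(\mathbf{x}, \textnormal{y}) \sim D}$ then yields
\begin{equation}
\operatorname{E}_{(\mathbf{x}, \textnormal{y}) \sim D}\!\left[1 - \bm{1}_{\operatorname{Rob}(h, \mathbf{x}, \textnormal{y}; \mathbb{V}_{\mathbf{x}})}\right] \;\ge\; P(\mathbf{x} \in \mathbb{K}_D) \;=\; \int \lceil 1 - \max_k p(\textnormal{y}=k|\mathbf{x}=\bm{x}) \rceil \, p(\bm{x}) \, d\bm{x},
\end{equation}
where the last equality holds because $\lceil \cdot \rceil$ restricted to $[0,1]$ is precisely the indicator of $\mathbb{K}_D$. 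Since the right-hand side does not depend on $h$, it survives the infimum defining $\zeta^\sharp_D$.

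For the right inequality I would invoke the pointwise bound $\lceil t \rceil \ge t$ valid for $t \in [0,1]$ with $t = 1 - \max_k p(\textnormal{y}=k|\mathbf{x}=\bm{x})$, push it under the integral sign, and recognise the resulting quantity as $\beta_D$ via \cref{eq:bayeserror1}. This step is purely algebraic.

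The main obstacle I anticipate is defending the substitution $\bm{x'} = \bm{x}$ inside Formula~\ref{eq:correct}. A reader may object that adversarial reasoning typically concerns strictly perturbed inputs, but $\mathbb{V}_{\bm{x}}$ is a closed ball containing $\bm{x}$ itself, and Formula~\ref{eq:correct} quantifies over all $(\bm{x'}, y')$ of positive joint density, including the pair $(\bm{x}, y')$ with $y' \ne h(\bm{x})$. This single observation is where all the content of the bound lives; everything else is bookkeeping.
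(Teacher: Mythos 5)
Your proposal is correct and follows essentially the same route as the paper's own proof: the paper argues by contradiction that robustness at $\bm{x}$ is impossible when $1-\max_k p(\textnormal{y}=k|\mathbf{x}=\bm{x})>0$, using exactly your witness $\bm{x'}=\bm{x}\in\mathbb{V}_{\bm{x}}$ and whichever positive-probability label disagrees with $h(\bm{x})$ to violate Formula~\ref{eq:correct}. Your write-up merely makes explicit what the paper leaves implicit (passing from the pointwise failure to the expectation and the infimum over $h$, and the elementary $\lceil t\rceil\ge t$ step giving the comparison with $\beta_D$), which is a presentational difference rather than a different argument.
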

\begin{proof}
    Assume some classifier $h$ attains robustness at input $\bm{x}$, and the posterior probability is not certain, \emph{i.e.}, $1-\max_k p(\textnormal{y}=k\mid \mathbf{x}=\bm{x}) >0$. The latter infers that there exists some (non-zero probability) examples of $(\bm{x}, y_1)$ pair and some $(\bm{x}, y_2)$ pair, and $y_1\neq y_2$. The prediction for $\bm{x}$ differs from at least one of either  $y_1$ and $y_2$. Formally, the latter condition in our assumption entails $ (h(\bm{x})\neq y_1 \lor h(\bm{x})\neq y_2)$, which then entails 
    \begin{equation}
    \label{eq:disj}
    \begin{aligned}
        &\exists \bm{x'}\in\mathbb{V}_{\bm{x}}.~ h(\bm{x'})\neq y_1 \lor h(\bm{x'})\neq y_2  &\text{(because } \bm{x}\in\mathbb{V}_{\bm{x}} \text{)}\\
        &\exists \bm{x'}\in\mathbb{V}_{\bm{x}}\, \exists y'\in \mathbb{Y}.~p(\bm{x'}, y') > 0 ~\land~ h(\bm{x'})\neq y' & \text{(Disjunction Elimination)}
    \end{aligned}
    \end{equation}

    Condition (\ref{eq:disj}) contradicts the former condition in our assumption, \emph{i.e.}, Condition~\ref{eq:correct}. Thus, robustness may only be attained if there is no label uncertainty at an input.
\qed
\end{proof}

Uncertainty contributes to an irreducible error in both vanilla accuracy and robustness. The irreducible robustness error is at least the Bayes error. We are further interested in refining this boundary in scenarios where we know the value of the Bayes error but lack information about the posterior probabilities. To this end, we develop \cref{cor:certainrobust}.
\begin{corollary}
\label{cor:certainrobust}
Given a distribution $D$ over $\mathbb{X}\times\mathbb{Y}$, its irreducible robustness error is at least as large as the Bayes error multiplied by the number of classes divided by one less than the number of classes, \emph{i.e.},
    \begin{equation}
        \zeta^\sharp_D  \ge \frac{\abs{\mathbb{Y}}}{\abs{\mathbb{Y}}-1}\beta_D
    \end{equation}
where $\abs{\mathbb{Y}}$ denotes the number of classes.
\end{corollary}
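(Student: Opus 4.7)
The plan is to chain the corollary onto the lower bound already established in \cref{thm:certainrobust}, namely $\zeta^\sharp_D \ge \int \lceil 1 - \max_k p(\textnormal{y}=k\mid\mathbf{x}=\bm{x})\rceil\, p(\bm{x})\,d\bm{x} = p(\mathbf{x}\in \mathbb{K}_D)$. Given this, the task reduces to showing that $p(\mathbf{x}\in\mathbb{K}_D) \ge \frac{|\mathbb{Y}|}{|\mathbb{Y}|-1}\beta_D$, which rearranges to the stated inequality. So the whole argument is really a pointwise upper bound on the integrand of $\beta_D$ restricted to $\mathbb{K}_D$.

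First I would split the Bayes error integral along $\mathbb{K}_D$ and its complement. Outside $\mathbb{K}_D$ the integrand vanishes because $\max_k p(\textnormal{y}=k\mid\bm{x}) = 1$ by definition of $\mathbb{K}_D$, so only the restriction to $\mathbb{K}_D$ contributes and $\beta_D = \int_{\mathbb{K}_D}\bigl(1 - \max_k p(\textnormal{y}=k\mid\bm{x})\bigr)\,p(\bm{x})\,d\bm{x}$. Next I would apply the elementary observation that since the $|\mathbb{Y}|$ posterior probabilities sum to one, their maximum is at least the average: $\max_k p(\textnormal{y}=k\mid\bm{x}) \ge 1/|\mathbb{Y}|$, giving the pointwise bound $1 - \max_k p(\textnormal{y}=k\mid\bm{x}) \le (|\mathbb{Y}|-1)/|\mathbb{Y}|$ everywhere, and in particular on $\mathbb{K}_D$. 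Integrating yields
\begin{equation}
\beta_D \;\le\; \frac{|\mathbb{Y}|-1}{|\mathbb{Y}|}\int_{\mathbb{K}_D} p(\bm{x})\,d\bm{x} \;=\; \frac{|\mathbb{Y}|-1}{|\mathbb{Y}|}\,p(\mathbf{x}\in\mathbb{K}_D).
\end{equation}

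Finally I would rearrange this to $p(\mathbf{x}\in\mathbb{K}_D) \ge \frac{|\mathbb{Y}|}{|\mathbb{Y}|-1}\beta_D$ and combine with \cref{thm:certainrobust} to conclude $\zeta^\sharp_D \ge p(\mathbf{x}\in\mathbb{K}_D) \ge \frac{|\mathbb{Y}|}{|\mathbb{Y}|-1}\beta_D$, which is the claim. There is no real obstacle here; the only subtlety worth double-checking is the edge case $|\mathbb{Y}|=1$, which is degenerate (no uncertainty is possible, $\beta_D = 0$, and the factor is undefined) and should be tacitly excluded. The bound is tight in the worst case, attained when the posterior on $\mathbb{K}_D$ is uniform over all classes, which is a useful sanity check but not needed for the proof itself.
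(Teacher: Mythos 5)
Your proposal is correct and follows essentially the same route as the paper: both start from the bound $\zeta^\sharp_D \ge \int_{\mathbb{K}_D} p(\bm{x})\,d\bm{x}$ of \cref{thm:certainrobust}, use that the Bayes-error integrand is supported on $\mathbb{K}_D$, and apply the max-versus-average bound $\max_k p(\textnormal{y}=k\mid\mathbf{x}=\bm{x}) \ge 1/\abs{\mathbb{Y}}$ before rearranging. The paper merely organizes the algebra by adding and subtracting $\max_k p(\textnormal{y}=k\mid\mathbf{x}=\bm{x})$ inside $\int_{\mathbb{K}_D} p(\bm{x})\,d\bm{x}$, which is an equivalent rearrangement of your pointwise bound on the integrand.
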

\begin{proof}
    we have that $\zeta^\sharp_D\ge \int_{\mathbb{K}_D}p(\bm{x})d\bm{x} =  \int_{\mathbb{K}_D}(1 - \max_k p(\textnormal{y}=k|\mathbf{x}=\bm{x}) + \max_k p(\textnormal{y}=k|\mathbf{x}=\bm{x}))p(\bm{x})d\bm{x} = \beta_D + \int_{\mathbb{K}_D}(\max_k p(\textnormal{y}=k|\mathbf{x}=\bm{x}))p(\bm{x})d\bm{x} \ge \int_{\mathbb{K}_D} p(\bm{x})/\abs{\mathbb{Y}}d\bm{x} + \beta_D$. Thus, we can prove that $\int_{\mathbb{K}_D}p(\bm{x})d\bm{x}\ge\beta_D\abs{\mathbb{Y}}/(\abs{\mathbb{Y}}-1)$
\qed\end{proof}
In \cref{thm:certainrobust} and \cref{cor:certainrobust}, the lower bounds for the $\zeta^\sharp_D$ are established based on that a single input needs to have a deterministic label. Still, there are additional conditions that, if unmet, will prevent a classifier from attaining robustness for a given input. For instance, we can expand the certainty requirement from a single input to encompass any input within its vicinity. The input neighbours in the vicinity with uncertain labels can also contribute to the irreducible robustness error. Given an input $\bm{x}$ such that $ \bm{x}\notin \mathbb{K}_D$, if there exists an $\bm{x'}$ within this vicinity of $\bm{x}$ such that $\bm{x'}\in \mathbb{K}_D$, robustness at $\bm{x}$ cannot be attained. All such $\bm{x}$ forms a domain $\mathbb{K^*}_D$. $\mathbb{K^*}_D$ can be considered as a thin margin around $\mathbb{K}_D$, as shown in \cref{fig:peripheral}.  This expansion results in a more stringent condition. Consequently, we will likely identify a tightened lower bound for $\zeta^\sharp_D$.
\begin{corollary}
\label{cor:irrewithmargin}
    Given a distribution $D$ over $\mathbb{X}\times\mathbb{Y}$, then
    \begin{equation}
    \label{eq:irrewithmargin}
        \zeta^\sharp_D \ge 2\cdot\epsilon_\mathrm{eff}\cdot p_{\min} \cdot\left(\int_{\mathbb{K}_D} d \bm{x}\right)^{\frac{\dim\mathbb{X}-1}{\dim\mathbb{X}}} + \int_{\mathbb{K}_D}  p(\bm{x}) d \bm{x}
    \end{equation}
    where $\epsilon_\mathrm{eff}$ denotes the radius of the vicinity according to the definition of robustness, \emph{e.g.}, for $L^2$-perturbation, $\epsilon_\mathrm{eff}$ equals to the radius $\epsilon$. For general perturbations,
    \begin{equation}
        \frac{\pi^{\dim\mathbb{X}/2}}{\Gamma\left(\frac{\dim\mathbb{X}}{2} + 1\right)} \epsilon_\mathrm{eff}^{\dim\mathbb{X}} = \int_\mathbb{X} \lceil v(\bm{x})\rceil d\bm{x}
    \end{equation}    
\end{corollary}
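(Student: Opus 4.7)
The plan is to strengthen Theorem~\ref{thm:certainrobust} by enlarging the ``blocked'' region from $\mathbb{K}_D$ to $\mathbb{K}_D\cup\mathbb{K}^*_D$, and then to control $|\mathbb{K}^*_D|$ geometrically via the Brunn--Minkowski inequality. Theorem~\ref{thm:certainrobust} already charges the full probability mass of $\mathbb{K}_D$ against $\zeta^\sharp_D$, so the first step is to argue that every $\bm{x}\in\mathbb{K}^*_D$ must be added to that charge, and the second step is to lower bound $|\mathbb{K}^*_D|$ in terms of $|\mathbb{K}_D|$ and $\epsilon_\mathrm{eff}$.

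For the first step I would re-run the contradiction scheme used in the proof of Theorem~\ref{thm:certainrobust}. If $\bm{x}\in\mathbb{K}^*_D$, then by definition there exists $\bm{x'}\in\mathbb{V}_{\bm{x}}$ with $\bm{x'}\in\mathbb{K}_D$, so there are two labels $y_1\neq y_2$ satisfying $p(\bm{x'},y_1)>0$ and $p(\bm{x'},y_2)>0$. Any measurable classifier predicts at most one of them at $\bm{x'}$, which violates \cref{eq:correct} at $\bm{x}$ and therefore forbids $\operatorname{Rob}(h,\bm{x},y;\mathbb{V}_{\bm{x}})$. Since $\mathbb{K}_D$ and $\mathbb{K}^*_D$ are disjoint, combining with \cref{ineq:certainrobust} and then replacing the density on the margin by its infimum $p_{\min}$ yields
\begin{equation*}
\zeta^\sharp_D \;\ge\; \int_{\mathbb{K}_D} p(\bm{x})\,d\bm{x} + \int_{\mathbb{K}^*_D} p(\bm{x})\,d\bm{x} \;\ge\; \int_{\mathbb{K}_D} p(\bm{x})\,d\bm{x} + p_{\min}\,|\mathbb{K}^*_D|.
\end{equation*}

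The second step is purely geometric. Writing $n=\dim\mathbb{X}$ and $\omega_n=\pi^{n/2}/\Gamma(n/2+1)$, translation invariance of the vicinity identifies $\mathbb{K}_D\cup\mathbb{K}^*_D$ with the Minkowski sum $\mathbb{K}_D+\mathbb{V}_{\bm{0}}$. Brunn--Minkowski together with the defining identity $|\mathbb{V}_{\bm{0}}|=\omega_n\epsilon_\mathrm{eff}^n$ then gives $|\mathbb{K}_D+\mathbb{V}_{\bm{0}}|^{1/n}\ge|\mathbb{K}_D|^{1/n}+\omega_n^{1/n}\epsilon_\mathrm{eff}$. Raising to the $n$-th power and keeping only the leading cross term from the binomial expansion yields
\begin{equation*}
|\mathbb{K}^*_D| \;=\; |\mathbb{K}_D+\mathbb{V}_{\bm{0}}| - |\mathbb{K}_D| \;\ge\; n\,\omega_n^{1/n}\,\epsilon_\mathrm{eff}\,|\mathbb{K}_D|^{(n-1)/n}.
\end{equation*}

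The main obstacle is then converting the dimension-dependent coefficient $n\omega_n^{1/n}$ into the uniform constant $2$ that appears in \cref{eq:irrewithmargin}. This is a short monotonicity check: $n\omega_n^{1/n}$ equals $2$ at $n=1$ (an interval has two boundary points) and is monotonically increasing in $n$, in fact growing like $\sqrt{2\pi e n}$ by Stirling, so $n\omega_n^{1/n}\ge 2$ for every $n\ge 1$. Substituting this universal constant back into the margin estimate and combining it with the first display delivers exactly \cref{eq:irrewithmargin}.
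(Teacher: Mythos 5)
Your proposal is correct, and its overall skeleton matches the paper's: charge the full mass of $\mathbb{K}_D$ (via \cref{thm:certainrobust} / the contradiction argument extended to neighbours), add the mass of the margin $\mathbb{K}^*_D$, bound that mass below by $p_{\min}\cdot\operatorname{vol}(\mathbb{K}^*_D)$, and then lower-bound $\operatorname{vol}(\mathbb{K}^*_D)$ geometrically. Where you genuinely differ is the geometric step. The paper argues informally that, by the isoperimetric inequality, the margin volume is minimized when both $\mathbb{K}_D$ and the vicinity are balls, and then computes the margin as the difference of concentric spheres, obtaining the coefficient $\gamma=\frac{2\pi^{n/2}}{\Gamma(n/2)}\bigl(\pi^{-n/2}\Gamma(\tfrac n2+1)\bigr)^{(n-1)/n}$ with $n=\dim\mathbb{X}$, which it then asserts is at least $2$. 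You instead identify $\mathbb{K}_D\cup\mathbb{K}^*_D$ with the Minkowski sum $\mathbb{K}_D+\mathbb{V}_{\bm 0}$ (which needs the implicit symmetry of the vicinity, $-\mathbb{V}_{\bm 0}=\mathbb{V}_{\bm 0}$, satisfied for norm balls and assumed by the paper) and invoke Brunn--Minkowski plus the binomial cross term. Pleasingly, your coefficient $n\,\omega_n^{1/n}$ with $\omega_n=\pi^{n/2}/\Gamma(\tfrac n2+1)$ is \emph{exactly} the paper's $\gamma$, so the two routes land on the identical bound before relaxing the constant to $2$; your route has the advantage of replacing the paper's heuristic ``sweeping sphere'' appeal with a theorem that holds for arbitrary measurable $\mathbb{K}_D$, at the cost of requiring the Minkowski-sum identification and measurability of the sum. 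The remaining claims ($n\,\omega_n^{1/n}=2$ at $n=1$ and $\ge 2$ for all $n$) are stated by you without full proof, but the paper does no better for its $\gamma\ge 2$, and the inequality is easily checked, so this is not a gap.
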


\begin{proof}
   
    $\mathbb{K}^*_D$ emerges when a perturbation vicinity sweeps along the boundary of $\mathbb{K}_D$ and the isoperimetric inequality suggests that the volume of this marginal domain is minimized when both vicinity and $\mathbb{K}_D$ are $\dim\mathbb{X}$-spheres.    
    
    Thus, a lower bound of the volume of $\mathbb{K}^*_D$ can be expressed as the volume difference between two concentric spheres which is again greater than the product of their radius difference and the surface area of the inner sphere. Thus, the volume of $\mathbb{K}^*_D$ is lower bounded by
    \begin{equation}
    \label{eq:marginlb}
        \epsilon_\mathrm{eff} \cdot \operatorname{vol}(\mathbb{K}_D)^{\frac{\dim\mathbb{X}-1}{\dim\mathbb{X}}} \cdot\frac{2 \pi^{\dim\mathbb{X}/2}}{\Gamma\left(\frac{\dim\mathbb{X}}{2}\right)} \left( \pi^{-\dim\mathbb{X}/2} \Gamma\left(\frac{\dim\mathbb{X}}{2} + 1\right) \right)^{\frac{\dim\mathbb{X} - 1}{\dim\mathbb{X}}} 
    \end{equation}
    where $\Gamma$ represents the gamma function, and for all positive real numbers $\Gamma (z)=\int _{0}^{\infty }t^{z-1}e^{-t}\,dt$. We further simplify \cref{eq:marginlb} to $\epsilon_\mathrm{eff} \cdot \operatorname{vol}(\mathbb{K}_D)^{(\dim\mathbb{X}-1)/(\dim\mathbb{X})} \cdot \gamma$, where $\gamma \ge 2 \mbox{ for } \dim\mathbb{X} > 1$
    and a lower bound of $\operatorname{vol}(\mathbb{K}^*_D)$ is thus $2\cdot\epsilon_\mathrm{eff} \cdot(\int_{\mathbb{K}_D} d \bm{x})^{(\dim\mathbb{X}-1)/(\dim\mathbb{X})}$. In very high dimensions, the (minimum) volume of $\mathbb{K}^*_D$ is almost linearly related to the volume of $\mathbb{K}_D$. The irreducible error contributed by the marginal domain to robustness can be expressed as $\int_{\mathbb{K}^*_D} p(\bm{x}) d \bm{x}$. It is greater than or equal to $\int_{\mathbb{K}^*_D} p_{\min} d \bm{x}$, where $p_{\min} = \min_{\bm{x}} p (\bm{x})$. Thus, this irreducible error $\ge p_{\min} \cdot\operatorname{vol}(\mathbb{K}^*_D)$, contributes to the irreducible robustness error as the first term in \cref{eq:irrewithmargin}. This corollary is particularly useful if we know the non-zero $p_{\min}$ of the distribution.
\hfill\qed\end{proof}

In short,  \cref{thm:certainrobust}, \cref{cor:certainrobust}, and \cref{cor:irrewithmargin} suggest how we can get lower bounds of irreducible robustness error $\zeta^\sharp_D$ from the original distribution $D$, with lower bound from \cref{cor:irrewithmargin} being the tightest among three. Given a distribution, $\zeta^\sharp_D$ has two sources. One is the examples that have uncertain labels (which contribute to the error directly), and the other is the examples that have neighbours whose labels are uncertain (which contribute to the error indirectly). Additionally, when the Bayes error $\beta_D$ of a distribution is non-zero, the irreducible error of robustness $\zeta^\sharp_D$  is also non-zero and is greater than the Bayes error.

Theoretically, there is another way to tighten the bound provided by \cref{thm:certainrobust}. If we know the convolved distribution $D'$ obtained in \cref{sec:conv}, the $\zeta^\sharp_D$ can be calculated as  $p(\mathbf{x}\in\mathbb{K}_\mathrm{D'})$, \emph{i.e.}, the probability (in convolved distribution) that input has a deterministic label. Thus,
    \begin{equation}
        \zeta^\sharp_D =\operatorname{E}_{(\mathbf{x}, \textnormal{y}) \sim D'}\left[1 - \lfloor\max_k q(\textnormal{y}=k|\mathbf{x}) \rfloor\right] 
    \end{equation}
Since $D' = D*v$, as vicinity size grows, the Bayes error of $D'$ also grows, and thus the irreducible robustness error $\zeta^\sharp_D$ grows.

The least upper bound of robustness on a given data distribution $D$ can then be written as $1-\zeta^\sharp_D$, and 1 minus any lower bound of $\zeta^\sharp_D$ presented above serves as an upper bound of robustness on a given data distribution $D$. These upper bounds are directly derived from the data distribution $D$ and the vicinity function $v$, independent of any specific classifier.

Although we have been using both Formulae \ref{eq:blind} and (\ref{eq:correct}) throughout this subsection, the existing studies only rely on Formula~\ref{eq:blind}~\cite{li2023sok} for practical evaluation of certified robust accuracy. Intuitively, we sometimes do not know the true label of a neighbour $\bm{x'}$ in an input $\bm{x}$'s vicinity, and thus use the $\bm{x}$'s label instead. Consequently, the correctness of $\bm{x'}$ prediction is neglected. Instead, only the consistency between predictions on $\bm{x'}$ and $\bm{x}$, as well as the correctness of prediction on $\bm{x}$, are considered. This simplification could result in a different certified robust accuracy for classifiers and exceed our upper bounds of robustness on a given data distribution (\cref{thm:certainrobust}). To this end, we also present the irreducible robustness error $\zeta_D$ in \cref{eq:ub_compute} and the corresponding upper bound for such robustness on a given data distribution $1-\zeta_D$. We use \cref{fig:ub_compute} to illustrate its effect.
\begin{equation}
\label{eq:ub_compute}
    \zeta_D =\int_{\mathbb{K}_{D^\dagger}} q(\bm{x})d\bm{x} + \int_{\mathbb{X}\setminus\mathbb{K}_{D^\dagger}} \left(1 - \max_k q(\textnormal{y}=k|\mathbf{x}=\bm{x})\right) q(\bm{x}) d \bm{x}
\end{equation}
where $D^\dagger$ is a distribution obtained from convolving the vicinity function $v$ and the ``hardened'' distribution of $D'$, \emph{i.e.}, each $p_\mathrm{hard}(\textnormal{y}=k_{\max}|\mathbf{x}=\bm{x}) = q(\bm{x})$ and for other $k\neq k_{\max}$, $p_\mathrm{hard}(\textnormal{y}=k|\mathbf{x}=\bm{x}) = 0$. Then, $p_{D^\dagger} = p_\mathrm{hard}*v$. Recall that $q$ is the conditional distribution of $D'$. In \cref{eq:ub_compute}, its first term suggests no input close to the boundary can attain robustness. For inputs not close to the boundary, as indicated by the second term, their optimal robustness on a given data distribution depends on the correctness of the prediction. In terms of \cref{fig:ub_compute}, the first term corresponds to the shaded area bounded by the vicinity, and the second term corresponds to all shaded areas outside the curve. Although \cref{eq:ub_compute} has tackled the label-missing challenges in practice, this theoretical evaluation of the irreducible error (in certified robust accuracy) could still rely on the knowledge of distribution. Thus, distribution estimating techniques are also needed when facing sampled data from an unknown distribution.

\section{Experiment and Results}
\label{sec:experiment}
In this section, we empirically test our results discussed above by designing and answering three research questions: 1) does certified training always result in a classifier on a distribution with a higher Bayes error; 2) is our computed upper bound of robustness indeed higher than the robustness achieved by all the existing certified training classifiers; and 3) does the upper bound of robustness change when the vicinity increases, and if so how does it change? 

The experiments are conducted with four data sets: two synthetic ones (\emph{i.e.}, Moons and Chan~\cite{chen2023evaluating}) and two standard benchmarks (\emph{i.e.}, FashionMNIST~\cite{xiao2017fashion} and CIFAR-10~\cite{krizhevsky2009learning}). Moons is used for binary classification with two-dimensional features, where each class's distribution is described analytically with specific likelihood equations, and uses a three-layer Multi-Layer Perceptron (MLP) neural network for classification. The Chan data set, also for binary classification with two-dimensional features, differs in that it does not follow a standard PDF pattern, requiring kernel density estimation (KDE) for non-parametric PDF estimation, and also uses the three-layer MLP. FashionMNIST, a collection of fashion item images, involves a 10-class classification task with 784-dimensional inputs (28$\times$28 pixel grayscale images). Each class has an equal prior probability, and their conditional distributions are estimated non-parametrically using KDE. CIFAR-10 uses images with a resolution of 32$\times$32 pixels. Similar to FashionMNIST, it has a balanced class distribution and is estimated using KDE. We use a seven-layer convolutional neural network (CNN-7)~\cite{shi2021fast} as the classifier of both FashionMNIST and CIFAR-10. We adopt a direct approach~\cite{ishida2022performance} to compute the original Bayes error of both FashionMNIST and CIFAR-10~\cite{ishida2022performance}.

To train the classifiers, two approaches are adopted, \emph{i.e.}, empirical error minimization (ERM~\cite{vapnik1999nature}) for standard training, and the state-of-the-art (SOTA) small-box method for certified training~\cite{muller2022certified}. The performance of these classifiers is evaluated using two metrics: vanilla accuracy and certified robust accuracy~\cite{muller2022certified}. Note that, certified robust accuracy measures the proportion of predictions that can be certified as robust in terms of satisfying Formula~\ref{eq:blind}. \\

\begin{figure}[t]
  \centering
  \begin{subfigure}[t]{0.23\linewidth}
    \centering
    \captionsetup{justification=centering}
    \includegraphics[width=\linewidth]{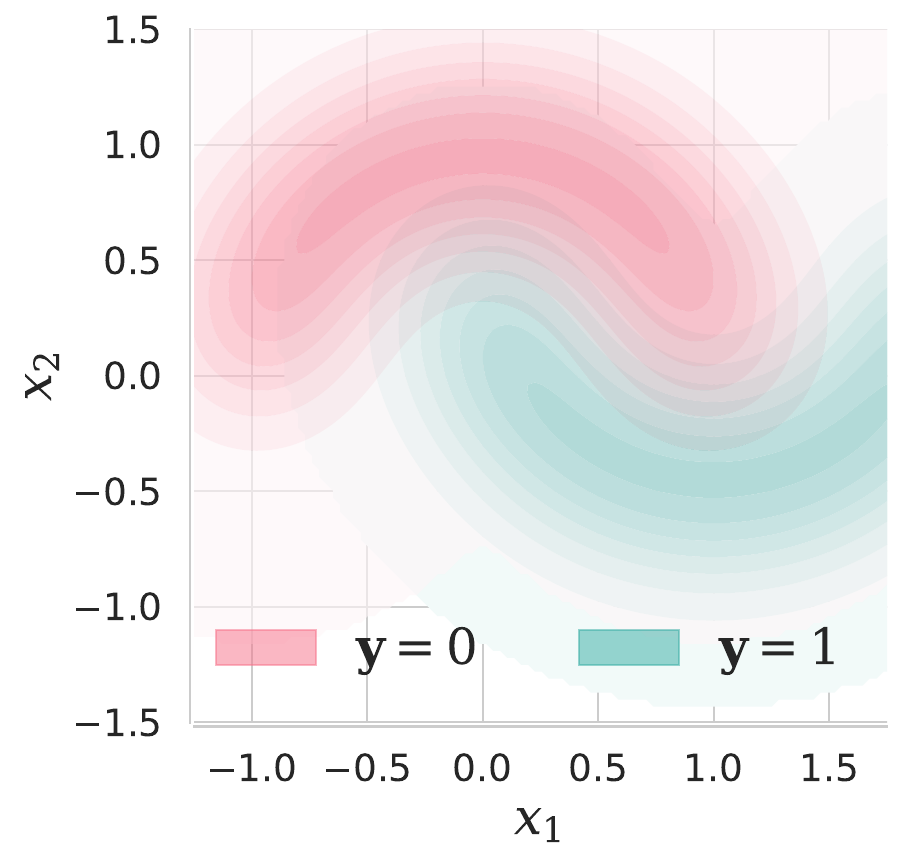}
    \caption{Original\\$\beta_{\text{Moons}}=8.54\%$}
  \end{subfigure}%
  \hfill 
  \hfill
  \begin{subfigure}[t]{0.23\linewidth}
    \centering
    \captionsetup{justification=centering}
    \includegraphics[width=\linewidth]{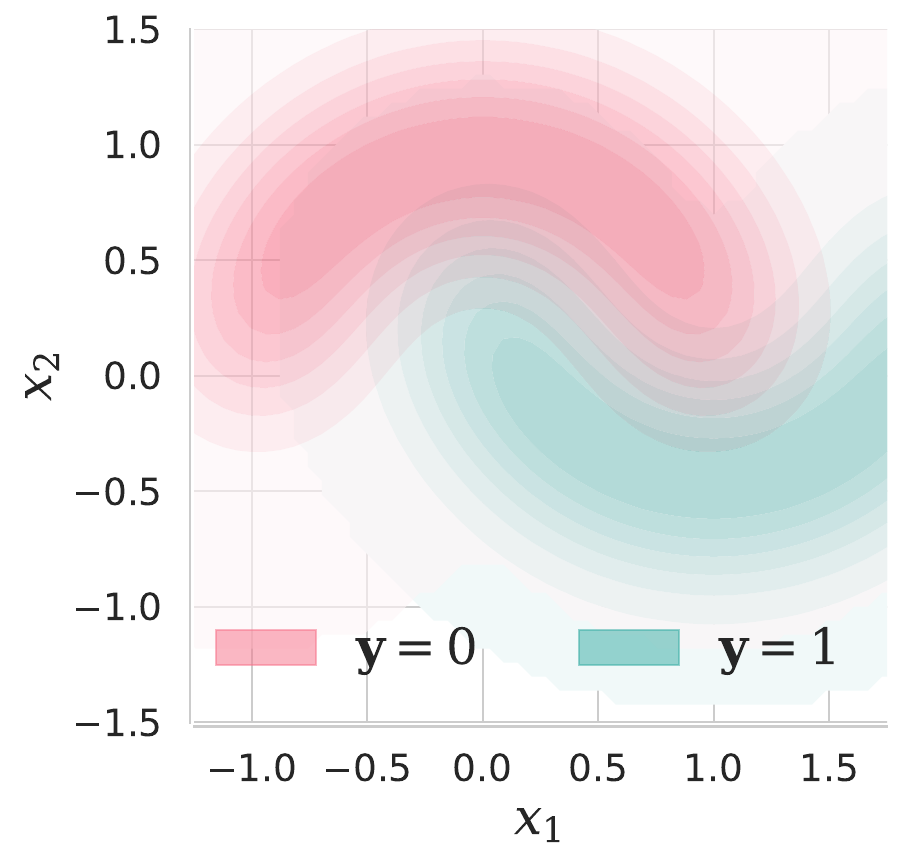}
    \caption{Convolved\\$\beta_{\text{Moons}'}=9.24\%$}
  \end{subfigure}%
  \hfill
  \begin{subfigure}[t]{0.23\linewidth}
    \centering
    \captionsetup{justification=centering}
    \includegraphics[width=\linewidth]{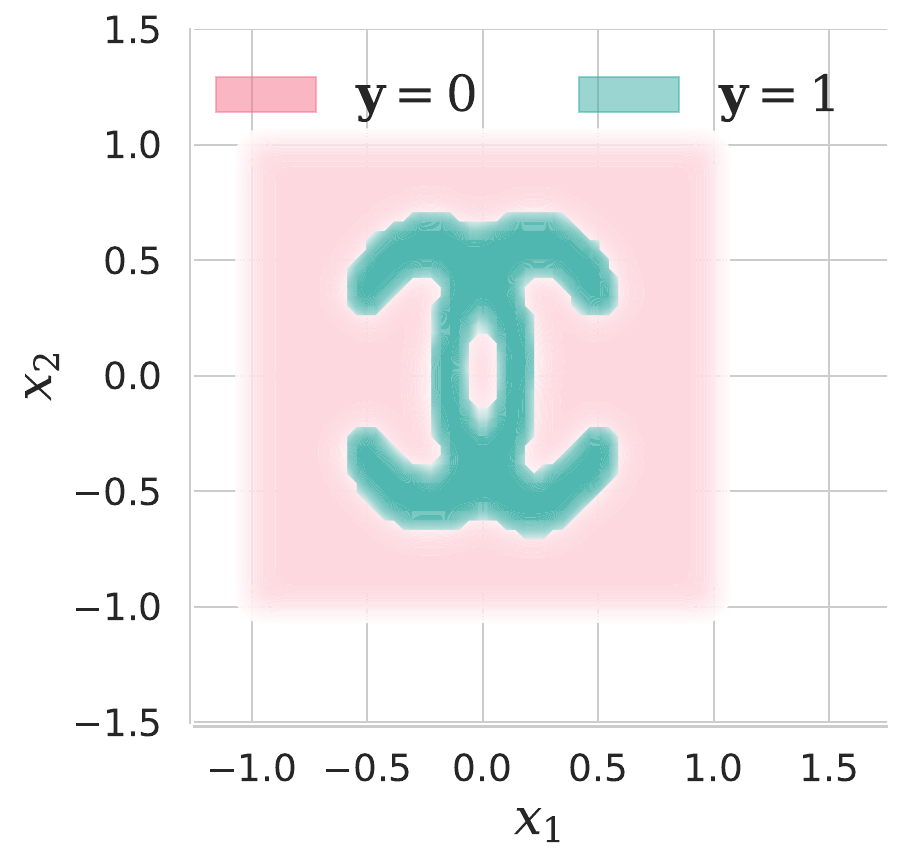}
    \caption{Original\\$\beta_{\text{Chan}}=5.39\%$}
  \end{subfigure}%
  \hfill
  \begin{subfigure}[t]{0.23\linewidth}
    \centering
    \captionsetup{justification=centering}
    \includegraphics[width=\linewidth]{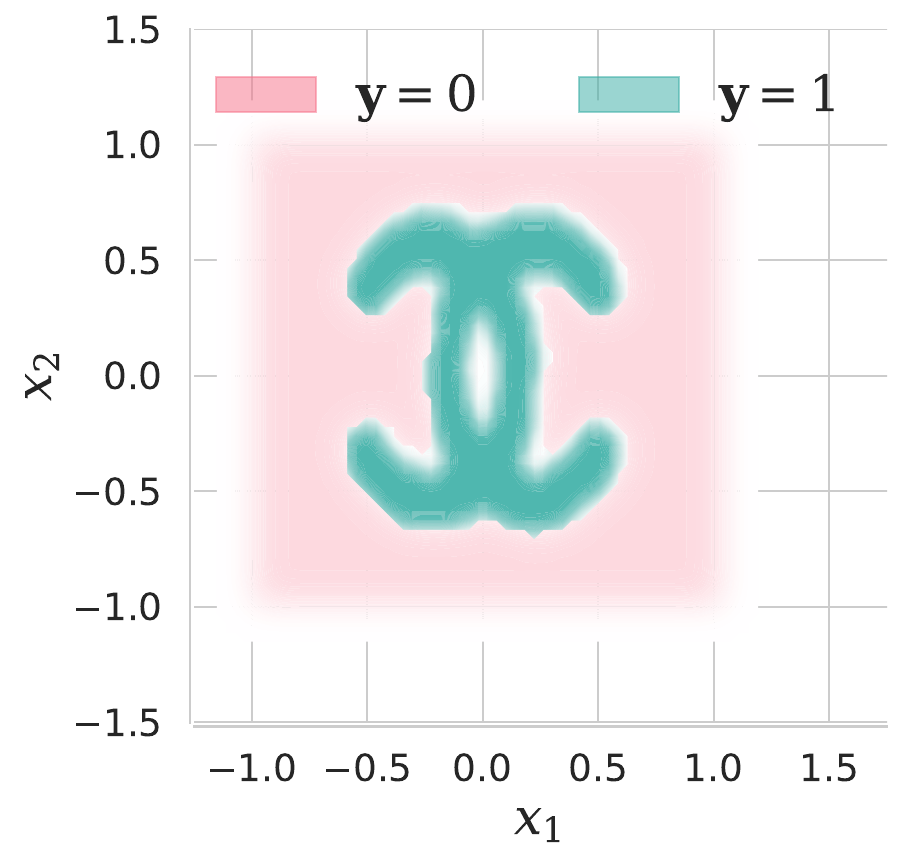}
    \caption{Convolved\\$\beta_{\text{Chan}'}=9.66\%$}
  \end{subfigure}%
  \begin{subfigure}[t]{0.05\linewidth}
    \centering
    \includegraphics[width=\linewidth, trim=2.cm -2.5cm 0.32cm 0.cm, clip]{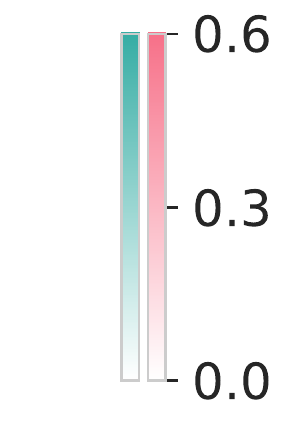}
  \end{subfigure}
  \caption{The conditional distribution before and after convolution for (a, b) Moons and (c, d) Chan. For both Moons and Chan, $L^\infty$ size is set at $\epsilon=0.15$. We also report the Bayes error to show the change of inherent uncertainty in each distribution.
  \label{fig:convolved}}
\end{figure}

\noindent \emph{RQ1: Does the Bayes error indeed grow when certified training is applied?} We would like to check if the Bayes error indeed sees a growth when certified training is used. To do that, we first need to obtain the altered distribution used in the context of certified training. As explained in \cref{sec:conv}, certified training extends the label of an input to its vicinity, and thus results in a convolutional effect across the entire given distribution. Therefore, we can obtain a convolved distribution of each given distribution with each vicinity. Then, we compare the original distribution and the convolved distribution of a given data set, and the Bayes error of the distribution before and after convolution.

We use the Moons and Chan data sets, setting a $L^\infty$ vicinity at $\epsilon=0.15$. Then, we get the convolved distribution of each data set (using FFT-based convolution, implemented through \texttt{scipy.signal.fftconvolve}) and the results are demonstrated in \cref{fig:convolved}. Observing the comparison shown in \cref{fig:convolved}(a, b) and (c, d), we can see that the original distribution has gone through a ``melting'' process, \emph{i.e.}, the peaks of each distribution becomes lower, and the spread increases. For example, in \cref{fig:convolved}b, the upper moon's centre region (around $x_1=0,x_2=0.6$) has a higher concentration of inputs from the lower moon than that in (a). This is because convolution with a rectangular function, \emph{e.g.}, vicinity function in our case, is essentially smoothing the original conditional distribution.

To quantify the increased overlap between the density function of each distribution after convolution, we compute their Bayes error. For Moons, the original Bayes error (\cref{fig:convolved}a) is 8.54\%, while the Bayes error after convolution is 9.24\%. Similarly, for Chan, Bayes error increases from 5.39\% (c) to 9.66\% (d). As expected, the Bayes errors do grow, with the growth ranging from 8\% to nearly 80\%.

We find that convolving with the same vicinity function results in very different growth in the Bayes error. This is likely due to the shape of the original density function. For instance, each moon in the Moons distribution can be approximately seen as a single-modal distribution, and the density function does not have sharp changes. In contrast, the density function of each class's conditional distribution in Chan has sharper value changes at the central region (around $x_1=0,x_2=0.5$). This may suggest that the Chan distribution exhibits a larger shape change to its original distribution after convolution than Moons. Particularly, in Chan, we observe that the class with the highest probability at the central region changes. Originally, class-0 examples have a higher density in this region. However, after convolution, we can see from \cref{fig:convolved}d that this region is filled more with class-1 examples than class-0 examples. Essentially, this change shows a significant prediction change in the Bayes classifier. This is likely because convolution has a larger influence on the distributions with features with high (2D) frequencies.

In summary, by comparing the Bayes error before and after distribution alteration, we conclude that the Bayes error does increase when certified training is used, which aligns with \cref{thm:bayesgrow}. Moreover, the distribution alteration has a larger impact on distributions with high-frequency features than on originally smooth distributions. \\

\begin{figure}[t]
  \centering
  \begin{subfigure}[t]{\linewidth}
    \centering
    \includegraphics[width=\linewidth, trim=0.5cm 0.4cm 1cm 1.cm, clip]{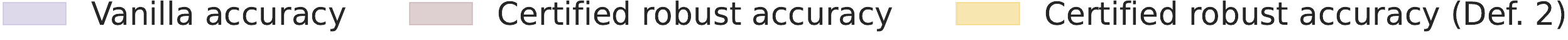}
  \end{subfigure}
  \hfill 
  \begin{subfigure}[t]{0.23\linewidth}
    \centering
    \includegraphics[width=\linewidth]{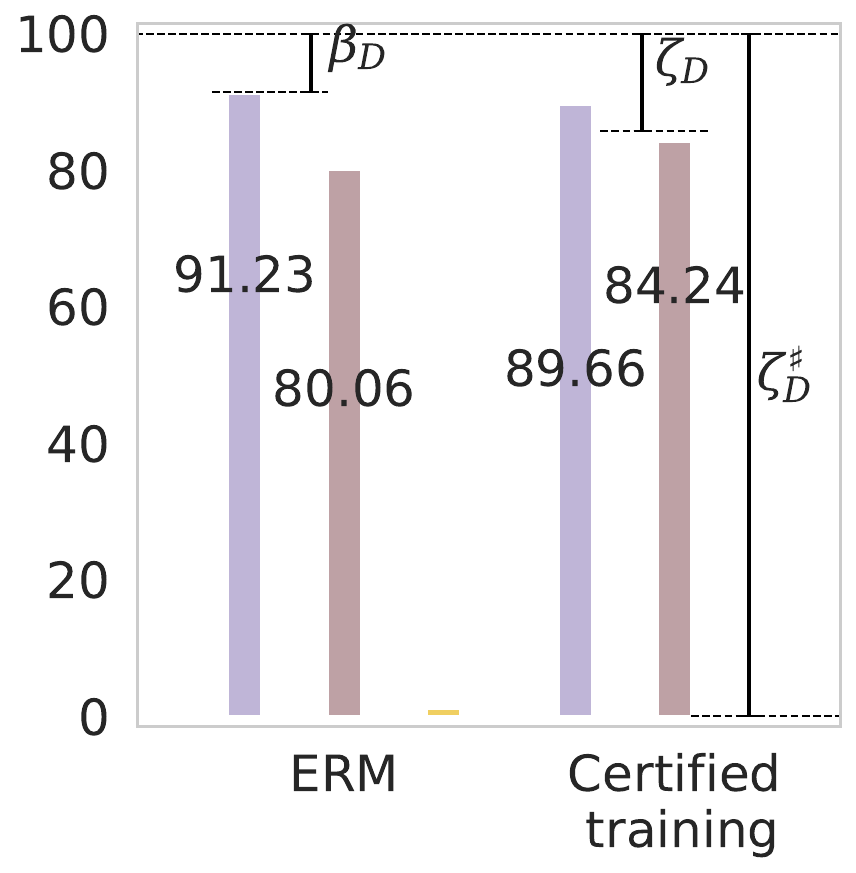}
    \caption{Moons $\beta_D = 8.54$, $\zeta_D = 14.28$, $\zeta^\sharp_D \approx 100$ (\%)}
  \end{subfigure}%
  \hfill
  \begin{subfigure}[t]{0.23\linewidth}
    \centering
    \includegraphics[width=\linewidth]{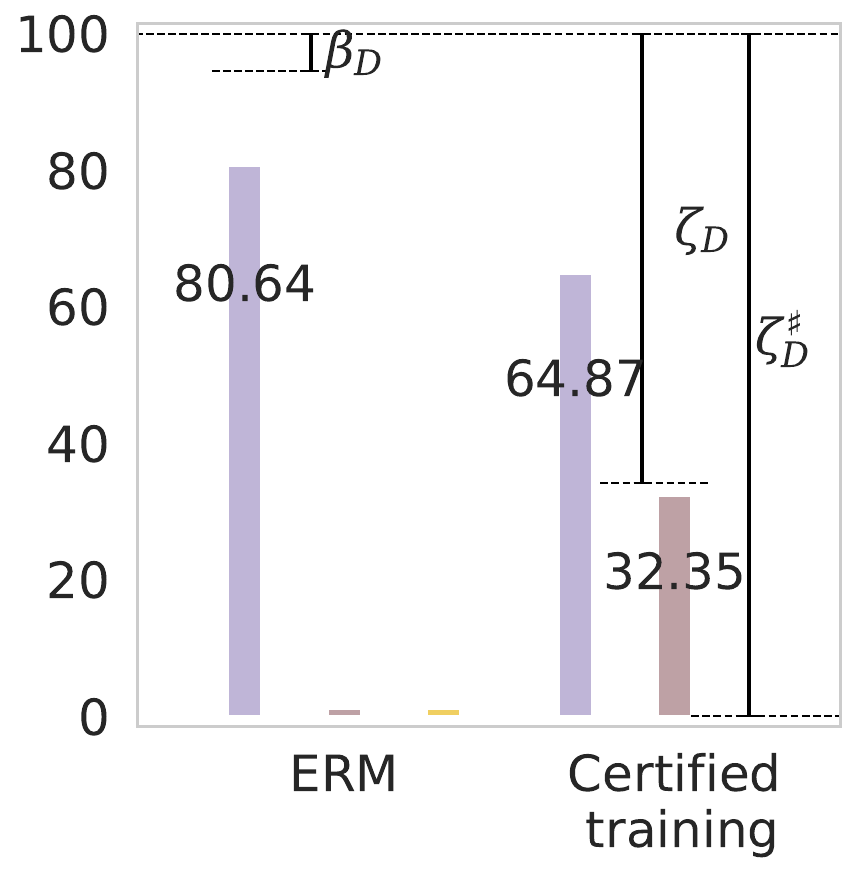}
    \caption{Chan $\beta_D = 5.38$, $\zeta_D = 65.77$, $\zeta^\sharp_D \approx 100$ (\%)}
  \end{subfigure}%
  \hfill
  \begin{subfigure}[t]{0.23\linewidth}
    \centering
    \includegraphics[width=\linewidth]{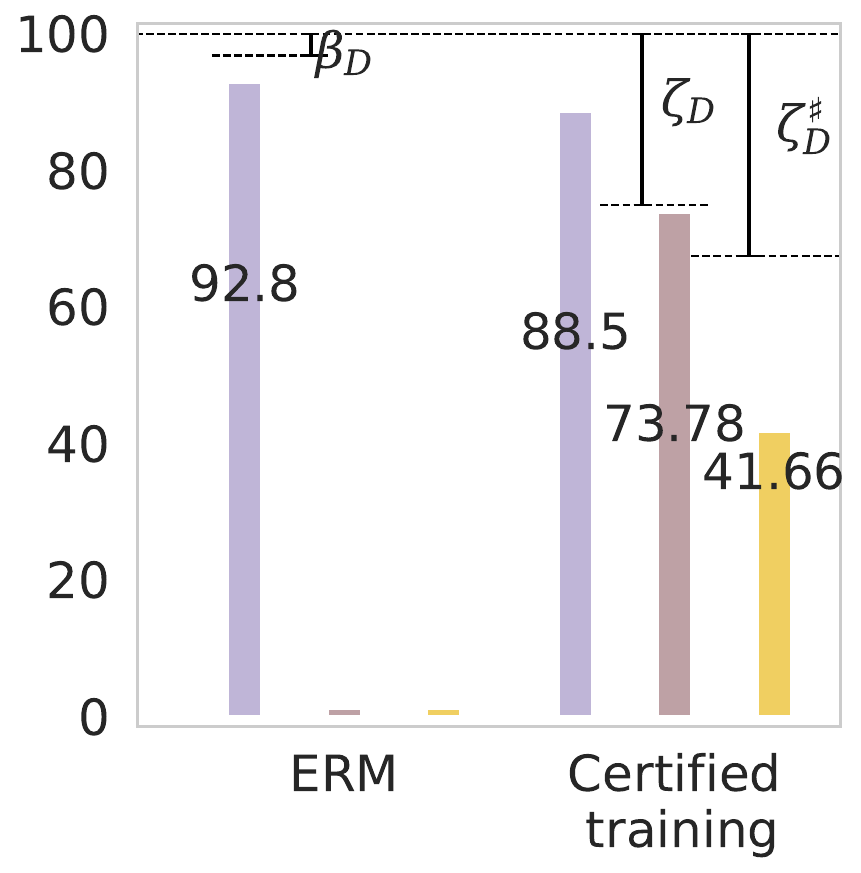}
    \caption{FashionMNIST $\beta_D = 3.15$, $\zeta_D = 25$, $\zeta^\sharp_D \approx 32.56$ (\%)}
  \end{subfigure}%
  \hfill
  \begin{subfigure}[t]{0.23\linewidth}
    \centering
    \includegraphics[width=\linewidth]{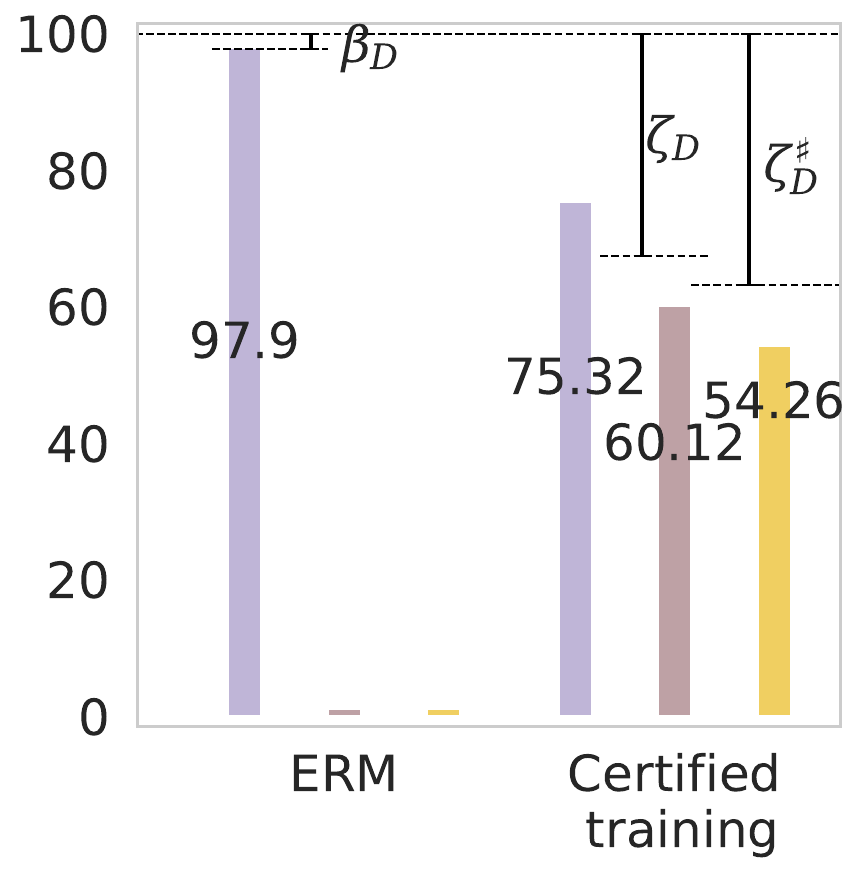}
    \caption{CIFAR-10 $\beta_D = 5.24$, $\zeta_D = 32.51$, $\zeta^\sharp_D \approx 36.81$ (\%)}
  \end{subfigure}%
  \caption{Upper bounds of robustness/accuracy and the state-of-the-art classifier's performance. The $L^\infty$ vicinity size for certified training /certified robust accuracy for each data set is $\epsilon = 0.15, 0.15, 0.1, 2/255$ for Moons, Chan, FashionMNIST, and CIFAR-10.
  \label{fig:accuracy}}
\end{figure}

\noindent \emph{RQ2: Is our upper bound of robustness empirically effective?} Next, we check whether the computed upper bound of robustness is indeed higher than the existing robustness evaluation in practice. To do that, we apply the closed-form \cref{eq:ub_compute} numerically to compute the irreducible robustness error $\zeta_D$ for each data set/distribution $D$. The upper bound of certified robust accuracy is $1 - \zeta_D$. Then, we use ERM and certified training to optimise the corresponding classifier of each data set. As such, we get two trained classifiers for each data set. For each classifier, we compute its performance metrics and compare the classifiers' performance against our upper bounds. We remark that the accuracy and certified robust accuracy may fluctuate when the sample size is not sufficiently large, as seen in \cref{fig:samplesize}. For example, if we are only given five samples, there is a high chance we get a very high or very low accuracy. For this reason, we gradually increase the sample size of test sets and observe its converged value.

\begin{figure}[t]
    \centering
    \begin{subfigure}[t]{0.4\linewidth}
    \centering
    \includegraphics[width=\linewidth]{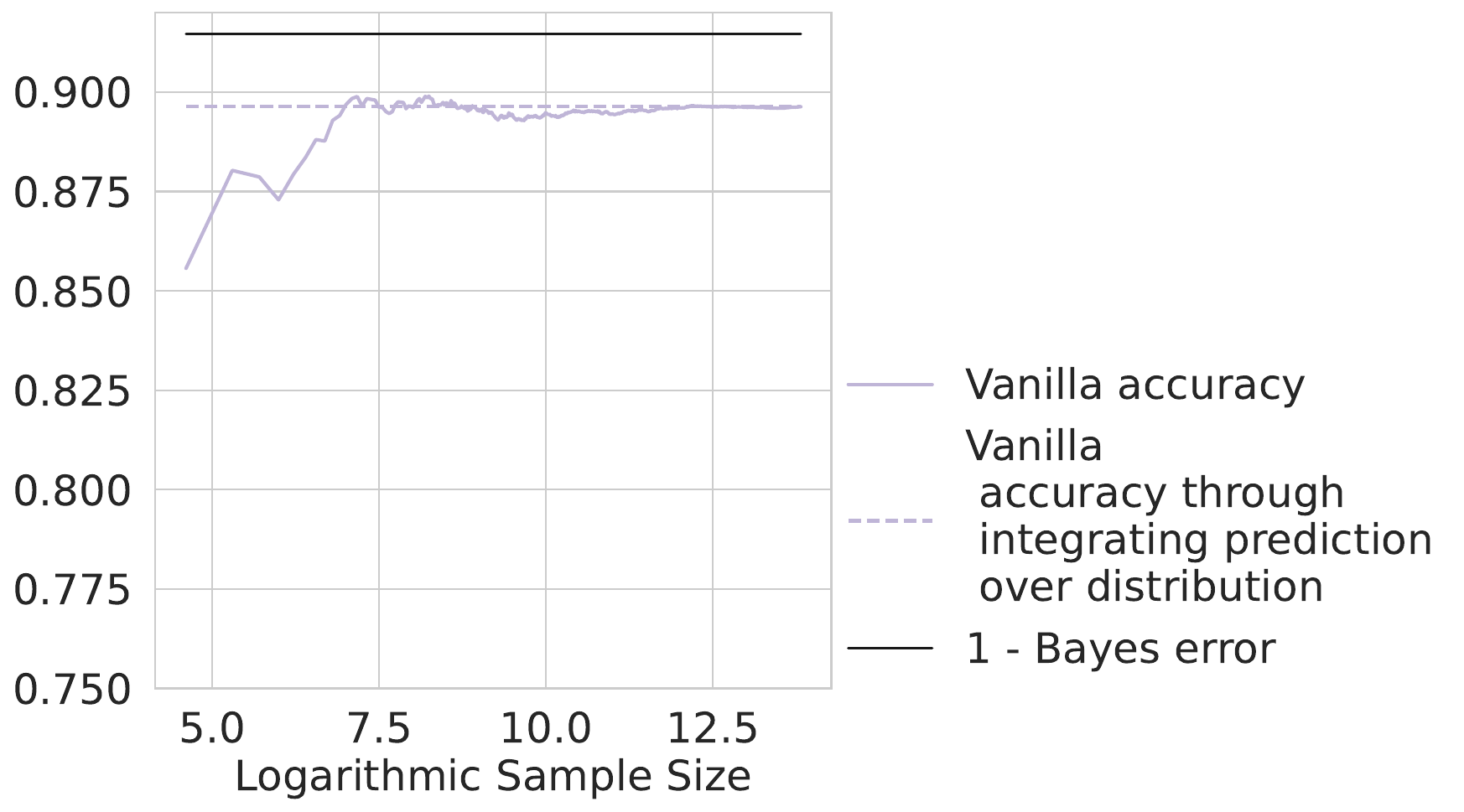}
    \end{subfigure}%
    \begin{subfigure}[t]{0.4\linewidth}
    \centering
    \includegraphics[width=\linewidth]{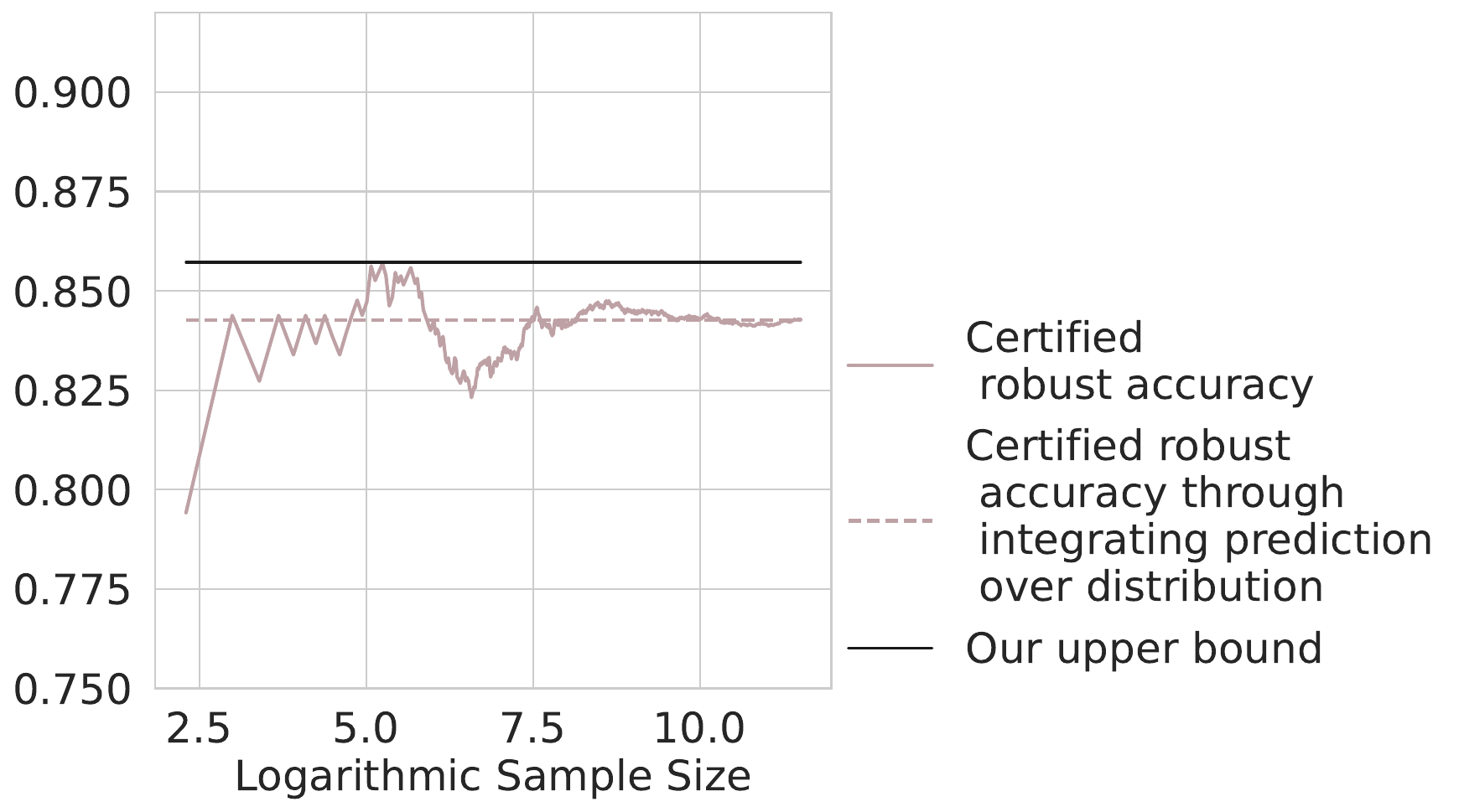}
    \end{subfigure}%
    \caption{As sample size grows, the accuracy converges to a value below \emph{1 - Bayes-error}. Similarly, the certified robust accuracy converges to a value below our upper bound of robustness. The figures are computed based on the Moons data set.}
    \label{fig:samplesize}
\end{figure}
 
The results are shown in \cref{fig:accuracy}. For each data set, the computed value $\zeta_D$ is detailed in the caption. The certified robust accuracy is represented by bars in the graph. For example, the MLP for the Moons dataset (seen in \cref{fig:accuracy}a) is trained twice. Initially, it is trained with ERM, achieving a vanilla accuracy of 91.23\%, which is nearly the optimal vanilla accuracy of 91.46\% (calculated as $1 - 8.54\%$). Here, the certified robust accuracy is about 80\% with an $L^\infty$ vicinity of $\epsilon=0.15$. When trained a second time with certified training, the MLP’s vanilla accuracy slightly decreases to 89.66\%, but its certified robust accuracy improves by 5.1\%, at 84.24\%. The improved certified robust accuracy is below the theoretical upper bound (marked by a dashed line in \cref{fig:accuracy}a, below the annotation $\zeta_D$), which is calculated to be 85.72\% ($1 - 14.28\%$). Furthermore, the gap between the certified robust accuracy of this classifier and its upper limit is relatively small, approximately 1.5\% in absolute percentage points.

Based on the result, we have multiple observations. First, we find that $1-\zeta_D$ consistently exceeds the certified robust accuracy achieved by state-of-the-art method~\cite{muller2022certified} across various datasets in \cref{fig:accuracy}. This gap, ranging from 1.5\% to 7.1\%, indicates the potential for further improving classifier robustness within these theoretical limits. For example, the Moons dataset has a small gap, suggesting limited room for improvement, while larger gaps in datasets like the Chan, FashionMNIST, and CIFAR-10 indicate more significant opportunities for increasing the robustness.

Second, we note that $\zeta_D$ consistently surpasses the Bayes error $\beta_D$ by a significant margin for all $D$. For example, in the Moons dataset, $\zeta_D$ is 14.2\%, which is 66\% higher than its $\beta_D$ of 8.54\%. This implies that robustness against perturbations is challenging, even when the inherent uncertainty of the data is considered. In datasets like FashionMNIST and CIFAR-10, despite their low Bayes error of 3.1\%-5.2\%, 
their $\zeta_D$ are at least six times higher (25.0\%-32.7\%). This indicates that factors other than inherent data uncertainty are affecting robustness. These factors are likely the newly generated uncertainty from certified training.  Moreover, 
some gaps between $\zeta_D$ and $\beta_D$  are particularly large (\emph{e.g.}, \cref{fig:accuracy}b). Such instances highlight the robustness challenges presented by each dataset can vary. Recall \cref{fig:convolved}d, the distribution of Chan can be particularly sensitive to convolution with vicinity.

Third, recall that the upper bound $\zeta_D$ and certified robust accuracy are based on  Formula~\ref{eq:blind} and \cref{eq:ub_compute}, and they do not consider the correctness of the label (\cref{def:ae}). If we take into consideration the correctness of the examples in the vicinity, we can compute a tighter bound $\zeta^\sharp_D$ based on \cref{eq:correct}, and certified robust accuracy (from Def.~\ref{def:ae}) is calculated by sampling a large finite number of neighbours of the input and evaluating their correctness. As the test sample size grows, more examples appear in the vicinity of some training samples, and the likelihood of correctly predicting all of them decreases. This result is also illustrated in the right-most columns in \cref{fig:accuracy}. As observed, certified robust accuracy (from Def.~\ref{def:ae}) is always lower than $1-\zeta^\sharp_D$. For instance, the certified robust accuracy of Moons (and Chan) decreases from 84.24\% (and 32.35\%) respectively to less than $10^{-7}$, and that of FashionMNIST (and CIFAR-10) decreases from 73.78\% (and 60.12\%) to 41.66\% (and 54.26\%) respectively. Such large reductions indicate a potential need for rethinking the robustness requirement, which may lead to different ways of defining and achieving robustness. \\

\begin{figure}[t]
    \centering
    \begin{subfigure}[t]{0.23\linewidth}
    \centering
    \includegraphics[width=\linewidth]{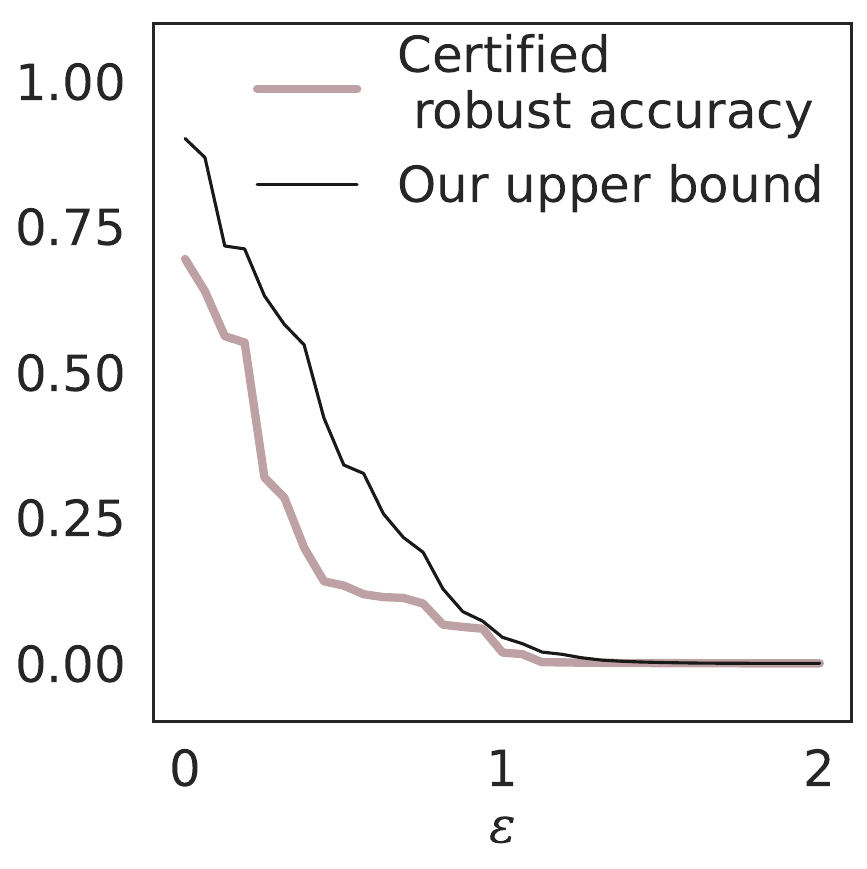}
    \caption{Moons $L^\infty$}
    \end{subfigure}%
    \begin{subfigure}[t]{0.23\linewidth}
    \centering
    \includegraphics[width=\linewidth]{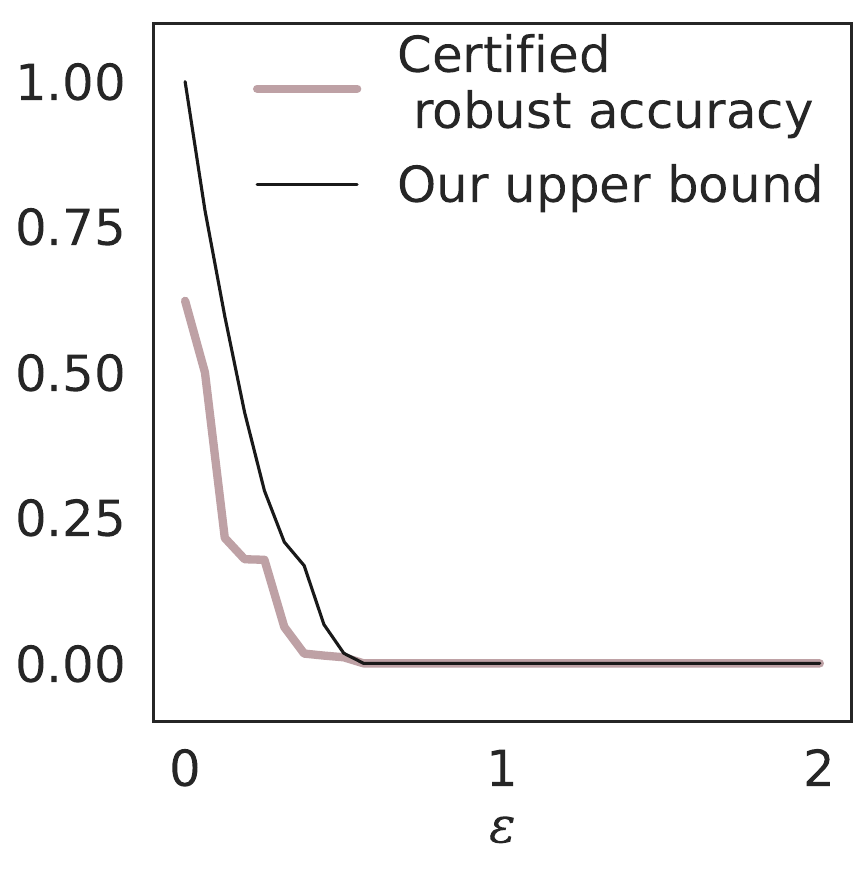}
    \caption{Chan $L^\infty$}
    \end{subfigure}%
    \begin{subfigure}[t]{0.23\linewidth}
    \centering
    \includegraphics[width=\linewidth]{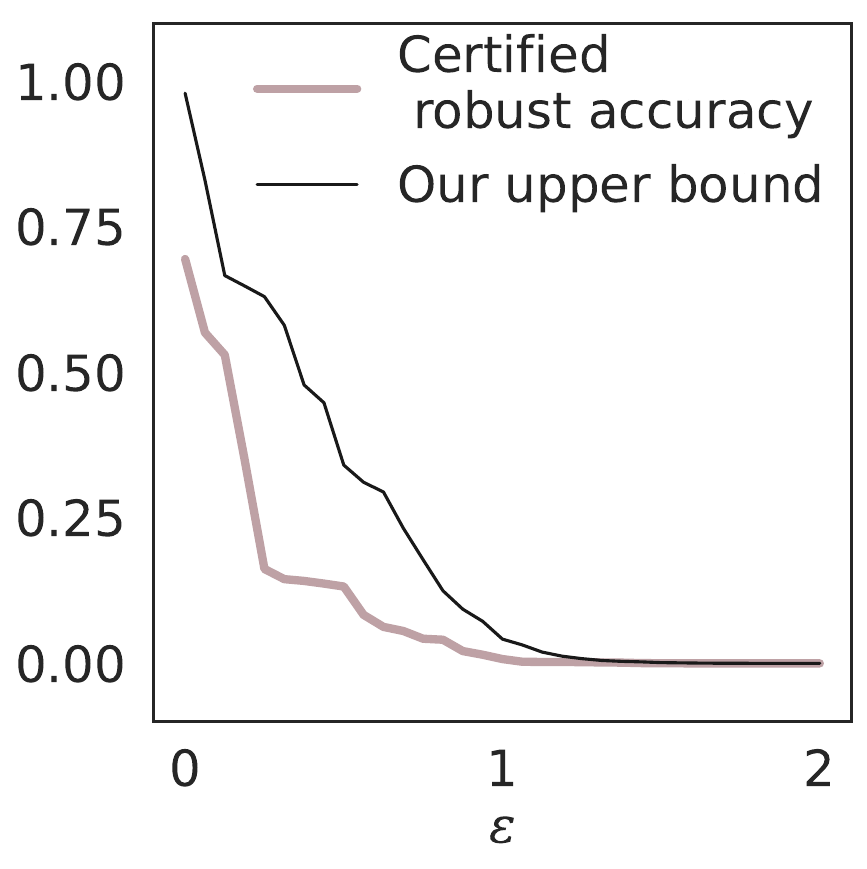}
    \caption{{\scriptsize FashionMNIST} $L^\infty$}
    \end{subfigure}%
    \begin{subfigure}[t]{0.23\linewidth}
    \centering
    \includegraphics[width=\linewidth]{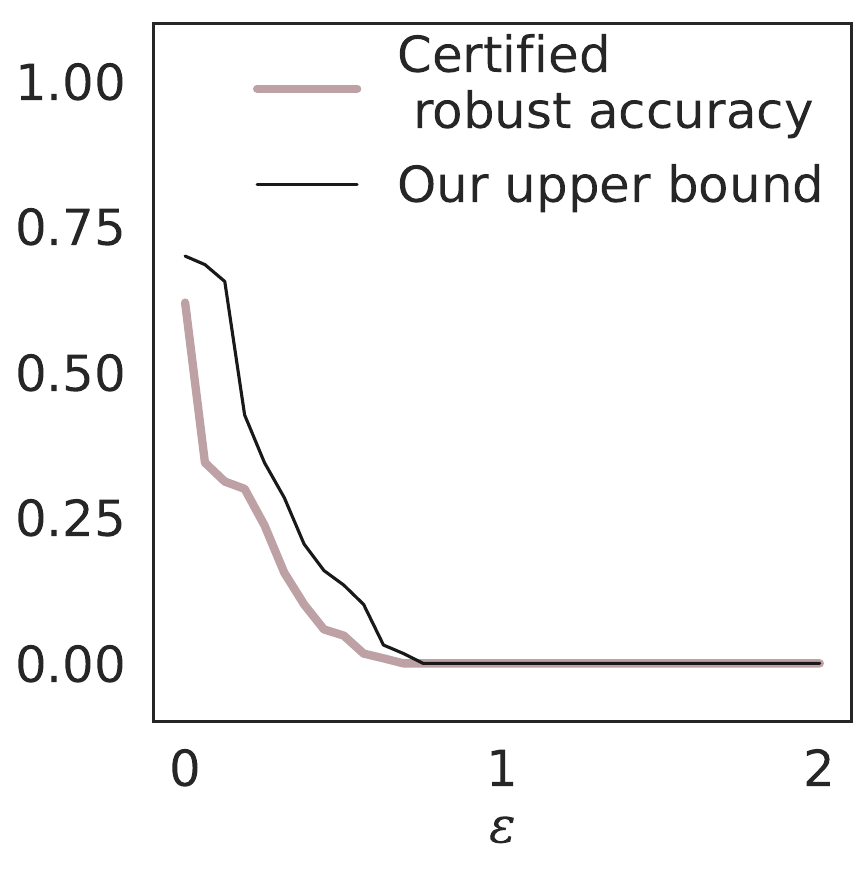}
    \caption{CIFAR-10 $L^\infty$}
    \end{subfigure}\\%
    \begin{subfigure}[t]{0.23\linewidth}
    \centering
    \includegraphics[width=\linewidth]{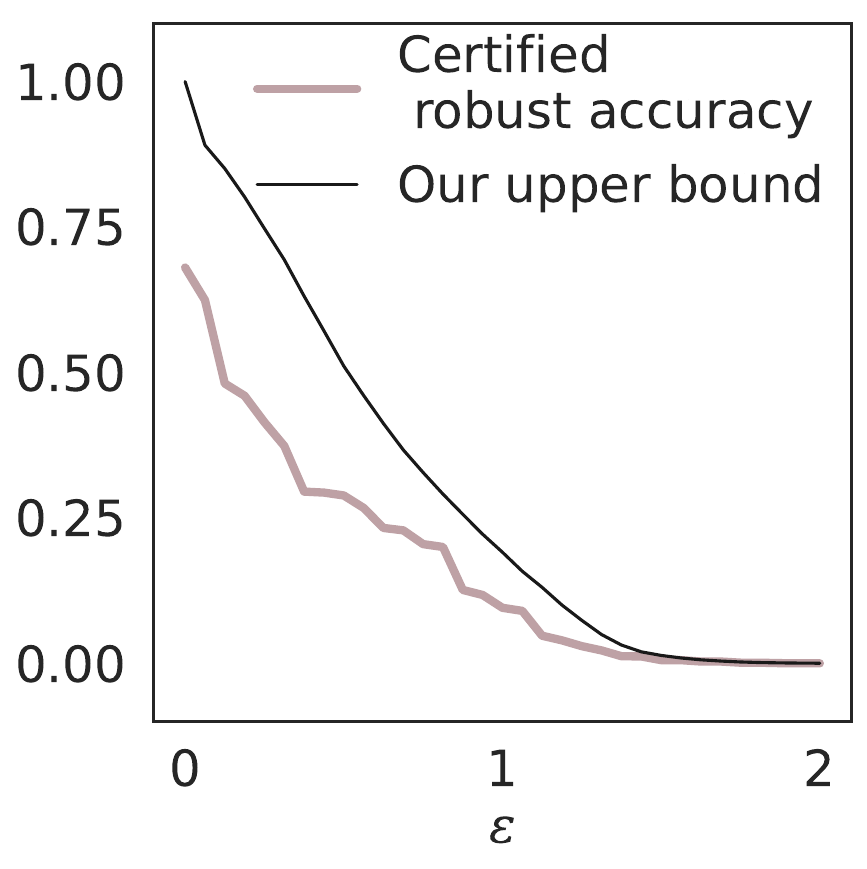}
    \caption{Moons $L^2$}
    \end{subfigure}%
    \begin{subfigure}[t]{0.23\linewidth}
    \centering
    \includegraphics[width=\linewidth]{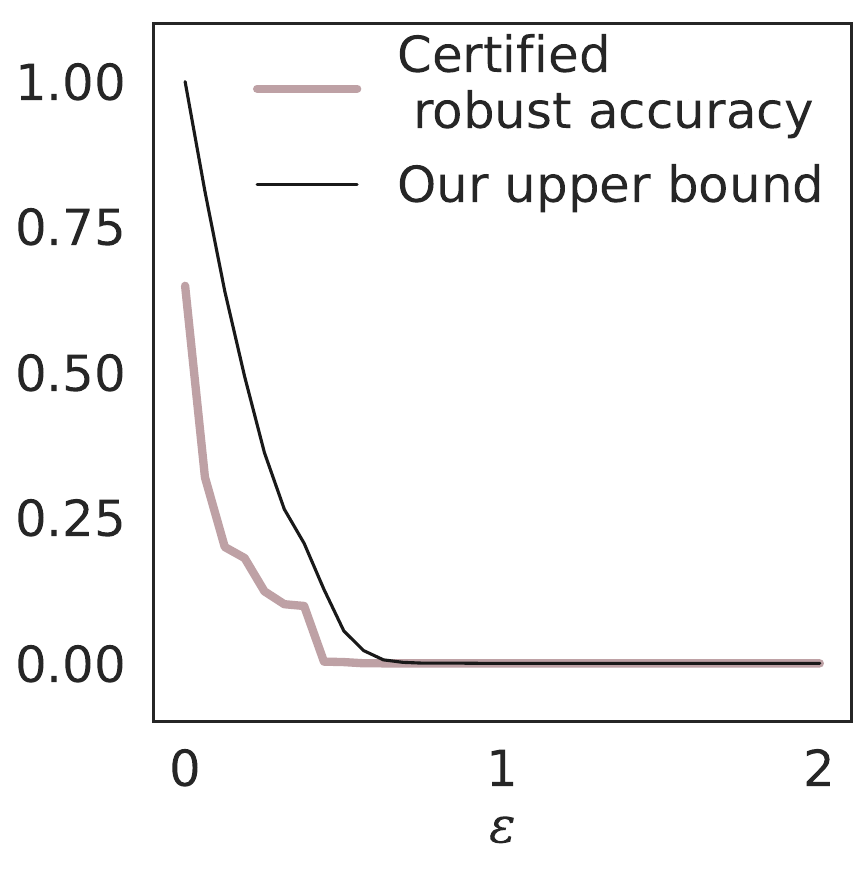}
    \caption{Chan $L^2$}
    \end{subfigure}%
    \begin{subfigure}[t]{0.23\linewidth}
    \centering
    \includegraphics[width=\linewidth]{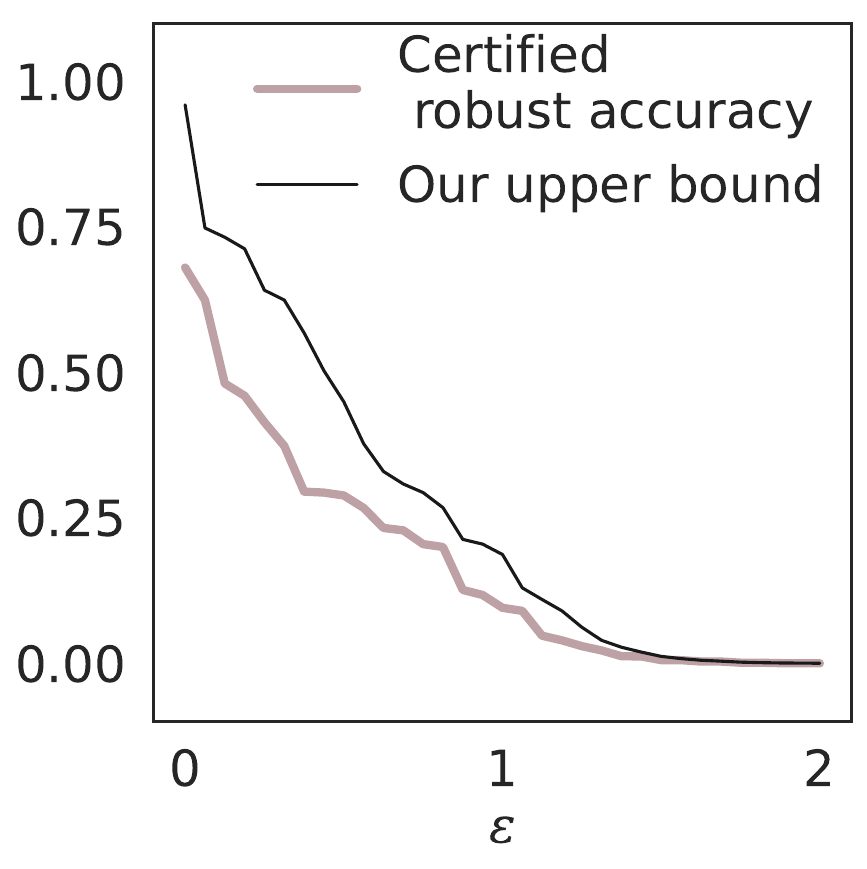}
    \caption{FashionMNIST $L^2$}
    \end{subfigure}%
    \begin{subfigure}[t]{0.23\linewidth}
    \centering
    \includegraphics[width=\linewidth]{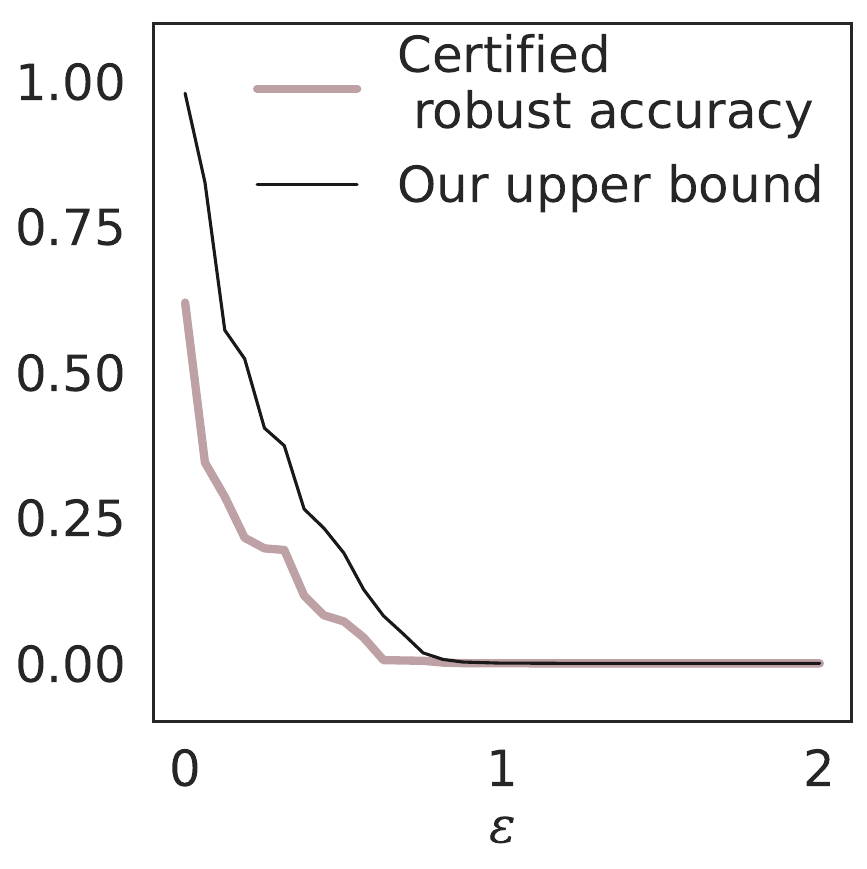}
    \caption{CIFAR-10 $L^2$}
    \end{subfigure}%
    \caption{As epsilon increases, we plot the robustness upper bound change as well as classifiers' certified robust accuracy change in the Moons and Chan dataset.}
    \label{fig:epsilon}
\end{figure}

\noindent \emph{RQ3: How does the upper bound of robustness vary when the vicinity size grows?} In the following, we investigate what can influence the value of irreducible robustness error/upper bound of certified robust accuracy. We already know that when the vicinity grows, it empirically becomes more difficult for a classifier to be robust~\cite{muller2022certified}. The question is then: how about the irreducible robustness error? Is it dependent on the size of the vicinity? If so, how are they correlated? To answer this question, we extend our experiment to cover various vicinity shapes ($L^\infty$ and $L^2$), and different vicinity sizes (from 0 to 2).

The results are shown in \cref{fig:epsilon}. Each sub-figure in \cref{fig:epsilon} illustrates the impact of increasing the vicinity size ($\epsilon$) on the upper bound ($1-\zeta_D$). For instance, in \cref{fig:epsilon}a, we present the change of $1-\zeta_D$ as well as the classifier's certified robust accuracy after certified training. We observe that for all datasets (Moons, Chan, FashionMNIST, CIFAR-10) and norms ($L^\infty$ and $L^2$), as the vicinity size grows, both the robustness upper bound and certified robust accuracy decrease monotonically. This indicates an inverse relationship between the upper bound and vicinity size. This finding aligns with our intuition that when vicinity size grows it becomes more difficult for a classifier to be robust. Notably, the CIFAR-10/Chan dataset shows a sharper decline in the upper bound than Moons/FashionMNIST, suggesting that some data distributions may inherently withstand perturbation better, which is consistent with our previous findings.

\section{Related Works}
This work is closely related to research on Bayes errors and certified training. Computing the Bayes error of a given data distribution has been studied for over half a century~\cite{fukunaga1975k}. Several works have derived upper and lower bounds of the Bayes error and proposed ways to estimate those bounds. Various \( f \)-divergences, such as the Bhattacharyya distance~\cite{fukunaga1990introduction} or the Henze-Penrose divergence~\cite{berisha2015empirically,sekeh2020learning}, have been studied. Other approaches include directly estimating the Bayes error with \( f \)-divergence representation instead of using a bound~\cite{noshad2019learning}, and computing the Bayes error of generative models learned using normalizing flows~\cite{kingma2018glow,theisen2021evaluating}. More recently, a method has been proposed to evaluate Bayes error estimators on real-world datasets~\cite{renggli2021evaluating}, for which we usually do not know the true Bayes error. While these existing studies concentrate on vanilla accuracy, our approach extends the study into the realm of robustness. Besides, some studies may argue that the real-world datasets are well-separated so therefore the Bayes error predicted by the theorems may not be as severe~\cite{yang2020closer}. However, due to the information loss (photo-taking or compression), Bayes errors inevitably exist. For instance, Over 1/3 of CIFAR-10 inputs have been re-annotated by human annotators to have non-fixed labels (CIFAR-10H)~\cite{peterson2019human}, indicating non-zero uncertainty. Hence, calculating irreducible error, regardless of severity, holds significance in understanding the inherent limit of certified robustness.

Many certified training techniques have been developed to increase certified robust accuracy, including branch-and-bound~\cite{bak2020improved,gehr2018ai2,wang2018formal}, linear relaxation~\cite{lyu2021towards,singh2019abstract,baader2024expressivity}, Lipschitz or curvature verification~\cite{li2019preventing,trockman2021orthogonalizing}, and others~\cite{lecuyer2019certified}. In addition, a number of training techniques have been proposed specifically for improving certified robustness~\cite{li2023sok}, which include warm-up training~\cite{shi2021fast}, small boxes~\cite{muller2022certified}, and so on. Certified robust accuracy has seen only limited growth over the past decade, prompting research efforts to understand why. Besides the Bayes error perspective, there exists an explanation for this problem from the standpoint of the abstraction domain~\cite{mirman2021fundamental}. However, note that these studies often only focus on the concept of interval arithmetic. Additionally, factors such as the trade-off between certified robustness and vanilla accuracy have also been explored~\cite{pang2022robustness,tsipras2018robustness}.

\section{Conclusion}
\label{sec:conclusion}

In this work, we study the limit of classification robustness against perturbations. We are motivated by the observation that the robustness of existing certified classifiers tends to be suboptimal, and hypothesise that there is an irreducible robustness error linked to the classification distribution itself. We formally prove that this irreducible robustness error does exist and is greater than the Bayes error. Further, we present how to calculate the upper bound of robustness based on the data distribution and the vicinity within which we demand robustness. Besides, this work also provides empirical experiments that compute our upper bound on common machine learning data sets. Results show that our robustness upper bound is empirically effective. We conclude that the limit of classification robustness can be well elaborated from the Bayes error perspective and we hope that the upper bound we derive can enlighten future developments on certified training and other robust-classifier training.

\subsubsection*{Acknowledgements.}
We thank anonymous reviewers for their constructive feedback.
This research is supported by the Ministry of Education, Singapore under its Academic Research Fund Tier 3 (Award ID: MOET32020-0004). Any opinions, findings and conclusions or recommendations expressed in this material are those of the author(s) and do not reflect the views of the Ministry of Education, Singapore.

\subsubsection{Disclosure of Interests.}

The authors have no competing interests to declare that are relevant to the content of this article.

\appendix

\section{Robustness and Local Consistency}

When we estimate robustness, we need to clarify what are the conditions to be satisfied. As outlined in \cref{def:ae}, a key characteristic of an adversarial example is that it is wrongly predicted, which can reflect that a classifier is not that accurate. Another characteristic is that a benign input is very close to this wrongly predicted input, which then makes this wrongly predicted input hard to detect.

There is actually a third interpretation of robustness, which focuses solely on the local consistency, \emph{i.e.}, true labels are never a factor. This third interpretation is an independent objective for optimisation~\cite{tsipras2018robustness}. A classifier that always predicts a single class is completely consistent, but its accuracy can be as low as $1/\abs{\mathbb{Y}}$. Therefore, we only study robustness from the first and second viewpoint, \emph{i.e.}, robustness as general correctness and consistency within the vicinity, or robustness as consistency within the vicinity with the centre correctly predicted.

In many cases, these two characteristics do not have a clear difference. For example, both ask for the centre input to be correctly predicted. However, when we write their formal expressions, these two characteristics represent very different concepts. The first characteristic is captured in Formula~\ref{eq:correct}, and the second characteristic is captured in Formula~\ref{eq:blind}. In fact, most existing studies adopt only Formula~\ref{eq:blind}, for practical reasons, \emph{e.g.}, the true label of the (possibly adversarial) example may be unknown. In this case, optimise for consistency (alongside accuracy of centre inputs) is the only option.

When it comes to viewing robustness from viewpoint 1, it is more challenging than from viewpoint 2, especially when there is uncertainty in the distribution. As given in \cref{def:robustness}, robustness may be attained at $\bm{x}$ only if every example in its vicinity is classified correctly. Here, we formally explain the third finding in RQ2, as well as the algorithm. Let $n_{\bm{x}}$ denote the number of occurrences of an input $\bm{x}$, then the highest probability that every example in its vicinity is correctly classified is as follows.
\begin{equation}
\label{eq:prod}
    \prod_{\bm{x'}\in\mathbb{V}_{\bm{x}},~~ n_{\bm{x'}}} \max_k  p(\textnormal{y}=k|\mathbf{x}=\bm{x'})
\end{equation}
Then, if in this vicinity there is at least one neighbour $\bm{x''}$ whose label is uncertain, we can define a function $a(\bm{x})$ such that the $a(\bm{x}) = p(\textnormal{y}=k|\bm{x}) < 1$ for $\bm{x} =\bm{x''}$, and $a(\bm{x}) =1$ for others. As such, $\max_k p(\textnormal{y}=k|\bm{x'})\le q(\bm{x'})$, and
\begin{equation}
    \mathrm{Eq.~(\ref{eq:prod})}\le \prod_{\bm{x'}\in\mathbb{V}_{\bm{x}},~~ n_{\bm{x'}}} a(\bm{x'}) \quad=   a(\bm{x''})^{n_{\bm{x''}}}
\end{equation}
Then, when we keep sampling, $n_{\bm{x''}}$ grows, and
\begin{equation}
    \lim_{n_{\bm{x''}}\to\infty} a(\bm{x''})^{n_{\bm{x''}}}=0
\end{equation}
Thus, the general correctness within the vicinity, as part necessary condition for robustness (from viewpoint 1), may be attained at an input only if none of its neighbours, including itself, has uncertainty in their labels.

Moreover, note that the attempt to justify ``consistency within the vicinity with the centre correctly predicted'' as ``general correctness and consistency within the vicinity'' is problematic. Formally, even if a neighbour is predicted the same as its centre, and the centre is correctly predicted, it does not follow that this neighbour is \emph{correctly} predicted. Otherwise, we will get the following consequences. Suppose a classifier achieves consistency at $\bm{x}$
\begin{equation}
\begin{aligned}
    \lnot\exists &(\bm{x}, y), (\bm{x_1}, \bm{y_1}), (\bm{x_2}, \bm{y_2})\in \mathbb{X}\times\mathbb{Y}.~~\\ &(d(\bm{x_1}, \bm{x}) \le \epsilon\land d(\bm{x_2}, \bm{x}) \le \epsilon \land \bm{y_2} \neq \bm{y_1} )
\end{aligned}
\end{equation}
Consider the triangle inequality about distances such that  $(d(\bm{x_1}, \bm{x_2}) \le \epsilon\land d(\bm{x_2}, \bm{x}) \le \epsilon \to d(\bm{x}, \bm{x_2}) \le \epsilon$. Then, we get
\begin{equation}
    \forall (\bm{x_1}, \bm{y_1}), (\bm{x_2}, \bm{y_2})\in \mathbb{X}\times\mathbb{Y}.~~d(\bm{x_1}, \bm{x_2}) \le \epsilon\to \bm{y_1} = \bm{y_2}
\end{equation}
Since $\mathbb{X}$ is a connected domain (for most classification tasks), this local constancy can lead to global constancy, \emph{i.e.}, the labels would be constant across the entire distribution and would be trivial for classification purposes.

\subsection{Algorithms for Computing General Correctness}
We also present the algorithms that realistically compute this general correctness within the vicinity. The \cref{alg:double} uses two samples, $X_{\text{small}}$ and $X_{\text{large}}$, to evaluate the model's robustness. This separation allows for testing the model on a smaller set ($X_{\text{small}}$) against a larger reference set ($X_{\text{large}}$). 

In contrast, \cref{alg:single} employs a single sample $X$, which serves both as the source for generating test inputs and as the reference for distance calculations. This design leads to each input in the data set being compared with all other inputs in the same data set, including itself. A notable implication of this approach is the self-comparison aspect, where each input is guaranteed to be its own neighbour due to zero distance in self-comparison. While this might provide insights into the model's performance on the homogeneous data set, it potentially skews the accuracy assessment, particularly in cases where the neighbourhood sizes are small. Moreover, the use of a single data set in the revised algorithm impacts its applicability in assessing model generalizability. Since the model is tested on the same data used in distance calculations, the evaluation focuses more on internal consistency rather than generalization to new data.

In summary, \cref{alg:double}, with its separate test and reference sets, is more aligned with scenarios requiring assessment of generalizability to new data. In contrast, the \cref{alg:single}'s use of a single data set offers a more introspective look at model performance within a homogeneous data set. Nevertheless, in the experiment, the difference is almost negligible.

\begin{algorithm}
\caption{Testing Process for Moon-Shaped Data Clusters}
\label{alg:double}
\begin{algorithmic}[1]
\REQUIRE $n$: Number of samples in the large data set.
\REQUIRE $\theta$: Distance threshold.
\ENSURE $\alpha$: Proportion of data inputs with correct predictions.

\STATE $X_{\text{small}} \in \mathbb{R}^{10000 \times 2}, X_{\text{large}} \in \mathbb{R}^{n \times 2}, Y_{\text{large}} \in \{0,1\}^n$ (Generate data sets)
\STATE Initialize $C \in \mathbb{R}^{10000}$ and $N \in \mathbb{N}^{10000}$ as zero vectors

\FOR{$i = 1$ to $10000$}
    \STATE $D_i \leftarrow \max_{j=1}^{n} \left| X_{\text{large}, j} - X_{\text{small}, i} \right| $ \COMMENT{Chebyshev distance}
    \STATE $I_i \leftarrow \{j : D_{i,j} \leq \theta\}$
    \STATE $X_{\text{sub}, i} \leftarrow X_{\text{large}, I_i}, Y_{\text{sub}, i} \leftarrow Y_{\text{large}, I_i}$
    \STATE Convert $X_{\text{sub}, i}$ and $Y_{\text{sub}, i}$ to tensors $T_{X,i}$ and $T_{Y,i}$
    \STATE Move $T_{X,i}$ and $T_{Y,i}$ to GPU if available
    \STATE $O_i \leftarrow \text{model}(T_{X,i})$ \COMMENT{Model predictions without gradient tracking}
    \STATE $C_i \leftarrow \sum \mathbb{I}(\text{round}(O_i) == T_{Y,i})$ \COMMENT{Count correct predictions}
    \STATE $N_i \leftarrow |I_i|$ \COMMENT{Number of neighbors}
\ENDFOR

\STATE $\alpha \leftarrow \frac{1}{10000} \sum_{i=1}^{10000} \mathbb{I}(C_i == N_i)$ \COMMENT{Proportion of perfect accuracies}
\PRINT $\text{'Proportion of accuracies equal to 1: '}, \alpha$

\end{algorithmic}
\end{algorithm}

\begin{algorithm}
\caption{Testing Process for The same group of data}
\label{alg:single}
\begin{algorithmic}[1]
\REQUIRE $n$: Number of samples in the large data set.
\REQUIRE $\theta$: Distance threshold.
\ENSURE $\alpha$: Proportion of data inputs with correct predictions.

\STATE $ X \in \mathbb{R}^{n \times 2}, Y \in \{0,1\}^n$ (Generate data sets)
\STATE Initialize $C \in \mathbb{R}^{10000}$ and $N \in \mathbb{N}^{10000}$ as zero vectors

\FOR{$i = 1$ to $10000$}
    \STATE $D_i \leftarrow \max_{j=1}^{n} \left| X_{ j} - X_{i} \right| $ \COMMENT{Chebyshev distance}
    \STATE $I_i \leftarrow \{j : D_{i,j} \leq \theta\}$
    \STATE $X_{\text{sub}, i} \leftarrow X_{ I_i}, Y_{\text{sub}, i} \leftarrow Y_{ I_i}$
    \STATE Convert $X_{\text{sub}, i}$ and $Y_{\text{sub}, i}$ to tensors $T_{X,i}$ and $T_{Y,i}$
    \STATE Move $T_{X,i}$ and $T_{Y,i}$ to GPU if available
    \STATE $O_i \leftarrow \text{model}(T_{X,i})$ \COMMENT{Model predictions without gradient tracking}
    \STATE $C_i \leftarrow \sum \mathbb{I}(\text{round}(O_i) == T_{Y,i})$ \COMMENT{Count correct predictions}
    \STATE $N_i \leftarrow |I_i|$ \COMMENT{Number of neighbors}
\ENDFOR

\STATE $\alpha \leftarrow \frac{1}{10000} \sum_{i=1}^{10000} \mathbb{I}(C_i == N_i)$ \COMMENT{Proportion of perfect accuracies}
\PRINT $\text{'Proportion of accuracies equal to 1: '}, \alpha$

\end{algorithmic}
\end{algorithm}

\section{Robustness and Convolution}

Local consistency always plays a role in robustness, no matter whether general correctness is required. The requirement of local consistency can be viewed as each input's label must be somehow linked to its neighbours' labels. In \cref{sec:conv}, we present how labels are assigned in the context of robustness. Here, we show why this label-assignment action, when applied globally to the distribution, is equivalent to convolving the distribution with the vicinity function.

\begin{equation}
\begin{aligned}
    \operatorname{E}_{(\mathbf{x},\textnormal{y})\sim D} &\left[\lfloor\int_\mathbb{X} v (\mathbf{x} - \bm{x'})\cdot\bm{1}_{ \textnormal{y} = h(\bm{x'}) } d \bm{x'}\rfloor\right]\\
    = \sum_k\int &\lfloor\int_\mathbb{X} v (\bm{x} - \bm{x'})\cdot\bm{1}_{ k = h(\bm{x'}) } d \bm{x'}\rfloor p(\bm{x},k) d \bm{x}\\
    = \int \sum_k&\lfloor\int_\mathbb{X} v (\bm{x} - \bm{x'})\cdot\bm{1}_{ k = h(\bm{x'}) } d \bm{x'}\rfloor p(\bm{x},k) d \bm{x}\\
    = \int \sum_k&\lfloor\int_\mathbb{X} v (\bm{x} - \bm{x'})\cdot g_k(\bm{x'}) d \bm{x'}\rfloor p(\bm{x},k) d \bm{x}
\end{aligned}
\end{equation}
where
\begin{equation}
    g_k(\bm{x}) = \begin{cases}
        1, &\text{if}~ h(\bm{x}) = k\\
        0, &\text{otherwise}
    \end{cases}
\end{equation}
Also, we can understand that for any $\bm{x}$,
\begin{equation}
    \begin{aligned}
        g_k(\bm{x}) \in \set{0, 1}\\
        \sum_k g_k(\bm{x}) = 1
    \end{aligned}    
\end{equation}
Since the vicinity function is an even function, we further have
\begin{equation}
\begin{aligned}
     \operatorname{E}_{(\mathbf{x},\textnormal{y})\sim D} &\left[\lfloor\int_\mathbb{X} v (\mathbf{x} - \bm{x'})\cdot\bm{1}_{ \textnormal{y} = h(\bm{x'}) } d \bm{x'}\rfloor\right]\\   = \int \sum_k&\lfloor\int_\mathbb{X} v (\bm{x'} - \bm{x})\cdot g_k(\bm{x'}) d \bm{x'}\rfloor p(\bm{x},k) d \bm{x}\\
\end{aligned}
\end{equation}

\section{Interpretation of \emph{Original} Distribution}

The Bayes error reflects the inherent uncertainty in a given distribution and an increased Bayes error essentially represents a growth of uncertainty. We are interested in how the uncertainty can come into being in the original distribution, sometimes referred to as \emph{a priori} distribution.

\begin{theorem}[Conditional Existence of Distinct Outputs for Identical Inputs]
    Given a distribution $D$ over $\mathbb{X}\times\mathbb{Y}$, its Bayes error is $\beta_D$ is greater than zero if and only if there exists at least one pair of examples, denoted by $(\bm{x}, y)$ and $(\bm{x'}, y')$, within the system such that the input inputs of both instances are identical and label of these examples are different, \emph{i.e.},
    \begin{equation}
        \beta_D > 0 \Leftrightarrow \exists (\bm{x}, y), (\bm{x'}, y').~ (\bm{x} = \bm{x'} \land y \neq y')
    \end{equation}
\end{theorem}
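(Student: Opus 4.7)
The plan is to prove both directions of the biconditional by working directly with the definition
\begin{equation*}
\beta_D = \int \left(1 - \max_k p(\textnormal{y}=k\mid\mathbf{x}=\bm{x})\right)p(\bm{x})\,d\bm{x}
\end{equation*}
and the observation that the integrand is pointwise non-negative, so $\beta_D > 0$ iff the integrand is positive on a set of positive $p(\bm{x})\,d\bm{x}$-measure.

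For the backward direction ($\Leftarrow$), I would assume that there exist two examples sharing the same input $\bm{x}_0$ but carrying distinct labels $y_1 \neq y_2$, interpreted (as throughout the paper) as $p(\bm{x}_0, y_1) > 0$ and $p(\bm{x}_0, y_2) > 0$. By Bayes' rule (\cref{eq:bayestheorem}) this forces both $p(\textnormal{y}=y_1\mid\mathbf{x}=\bm{x}_0)>0$ and $p(\textnormal{y}=y_2\mid\mathbf{x}=\bm{x}_0)>0$, and since the posteriors sum to $1$, necessarily $\max_k p(\textnormal{y}=k\mid\mathbf{x}=\bm{x}_0) < 1$. Hence the integrand $(1-\max_k p(\textnormal{y}=k\mid\mathbf{x}=\bm{x}))p(\bm{x})$ is strictly positive at $\bm{x}_0$; by a measurability/continuity argument on the joint density, this positivity extends to a neighborhood of $\bm{x}_0$ of positive measure, and the integral defining $\beta_D$ receives a strictly positive contribution.

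For the forward direction ($\Rightarrow$), I would argue by contrapositive. Suppose no such pair of examples exists. Then for every $\bm{x}$ in the support of $\mathbf{x}$, the conditional distribution $p(\textnormal{y}=\cdot\mid\mathbf{x}=\bm{x})$ is concentrated on a single label $y_{\bm{x}}$, so $p(\textnormal{y}=y_{\bm{x}}\mid\mathbf{x}=\bm{x})=1$ and $\max_k p(\textnormal{y}=k\mid\mathbf{x}=\bm{x})=1$. Outside the support $p(\bm{x})=0$, so those points do not contribute. Therefore the integrand vanishes $p(\bm{x})\,d\bm{x}$-almost everywhere and $\beta_D=0$, contradicting $\beta_D>0$.

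The main obstacle I anticipate is the measure-theoretic subtlety in the continuous case: the statement $\exists(\bm{x},y),(\bm{x'},y').~\bm{x}=\bm{x'}\land y\neq y'$ literally quantifies over points, whereas Bayes error only cares about behaviour on sets of positive measure. Handling this cleanly requires fixing an interpretation of ``an example exists at $\bm{x}$'' (positive density, positive mass in every neighborhood, or membership in the support) and invoking a mild regularity assumption on the joint density (e.g., that $\bm{x}\mapsto p(\bm{x},y)$ is lower semicontinuous on its support) so that pointwise positivity upgrades to positivity on a set of positive Lebesgue measure. With that convention fixed, both directions reduce to the routine calculation sketched above.
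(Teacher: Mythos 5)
Your proposal takes essentially the same route as the paper: both directions are read off from the integral formula for $\beta_D$, with the integrand $1-\max_k p(\textnormal{y}=k\mid\mathbf{x}=\bm{x})$ vanishing exactly when the label at an existing input is deterministic. If anything, you are more careful than the paper's own proof, which silently treats pointwise non-degeneracy at a single input as sufficient for a strictly positive integral; your explicit caveat about upgrading pointwise positivity to positivity on a set of positive measure (via a regularity assumption or a positive-mass interpretation of ``an example exists'') addresses a gap the paper leaves implicit.
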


\begin{proof}
    We start from an assumption that $\beta_D > 0$ and for every input vector $\bm{x}$ there is a unique class label $y$ such that $p(\textnormal{y}|\mathbf{x}=\bm{x})$ is deterministic. Specifically, for a given $\bm{x}$, one of the class probabilities $p(\textnormal{y}=k|\mathbf{x}=\bm{x})$ is 1, while the others are 0. Refer to the computation of the Bayes error in \cref{eq:bayeserror1}. Since $1 - \max_k p(\textnormal{y}=k|\mathbf{x}=\bm{x})$ equal to 0, the integral evaluates to 0, which contradicts the assumption of a non-zero $\beta_D$. Conversely, if $\beta_D = 0$, then at each $\bm{x}$, either $p(\textnormal{y}=k|\mathbf{x}=\bm{x}))$ or $p(\mathbf{x}=\bm{x})$ has to be zero. This indicates that every \emph{existing} $\bm{x}$ is associated with a unique $y$. 
\qed
\end{proof}

Thus, if a given data set does not have a case where the same input is mapped to two different labels, then theoretically, its Bayes error should be zero. It is crucial to explicitly define the original distribution before we proceed with the calculation of its Bayes error. Additionally, note that the classifiers are expected to learn solely from the training data provided. Should there be any additional data used in the learning process, it must be considered as part of the \emph{a priori} distribution. This requirement is not intended to exclude relevant external data; rather, it is an essential prerequisite for computing the Bayes error.

\section{Experiment Setup Details}
\begin{figure}[t]
  \centering
  \begin{subfigure}[t]{0.23\linewidth}
    \centering
    \includegraphics[width=\linewidth]{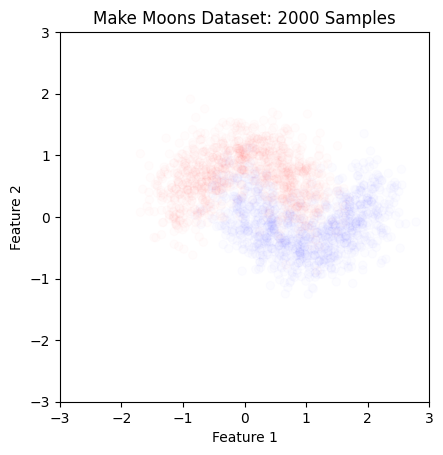}
    \caption{Moons}
  \end{subfigure}%
  \hfill
  \begin{subfigure}[t]{0.23\linewidth}
    \centering
    \includegraphics[width=\linewidth]{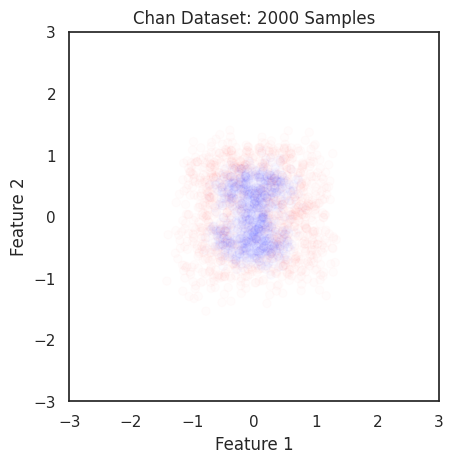}
    \caption{Chan}
  \end{subfigure}%
  \hfill
  \begin{subfigure}[t]{0.23\linewidth}
    \centering
    \includegraphics[width=\linewidth]{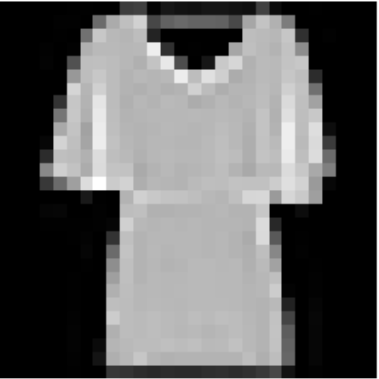}
    \caption{FashionMNIST}
  \end{subfigure}%
  \hfill
  \begin{subfigure}[t]{0.23\linewidth}
    \centering
    \includegraphics[width=0.93\linewidth]{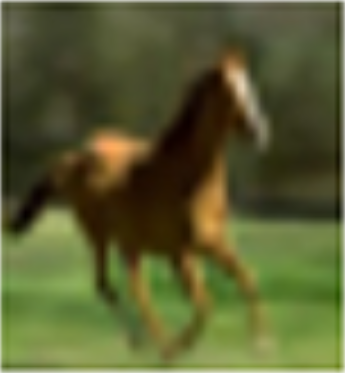}
    \caption{CIFAR-10}
  \end{subfigure}
  \caption{Data sets for classification. We illustrate some examples from each distribution
  \label{fig:dataex}}
\end{figure}

\subsection{Data sets}
Four data sets for classification are used throughout the experiment. They are Moons, Chan, FashionMNIST, and CIFAR-10. Moons and Chan are binary classification data sets while FashionMNIST, and CIFAR-10 are 10-class categorical classification data sets. Some examples of these data sets are given in \cref{fig:dataex}.

\paragraph{Moons}

The Moons data set is for binary classification, \emph{i.e.,} there are two prior probabilities. Each input in the $\mathbb{X}$ domain has features in two dimensions. Like our running example, the distribution of each class can be expressed analytically. For the upper moon, $\textnormal{y} = 0$, the likelihood $p(\mathbf{x}=(x_1, x_2)|\textnormal{y}=0)$ is given as
\begin{equation}
    \frac{1}{2\pi^2\sigma^2} \exp\left(-\frac{x_1^2 + x_2^2 + 1}{2\sigma^2}\right) \int_0^{\pi} \exp\left(\frac{x_1\cos(t) + x_2\sin(t)}{\sigma^2}\right) dt
\end{equation}
And for the lower moon, $\textnormal{y}=1$, the likelihood $p(\mathbf{x}=(x_1, x_2)|\textnormal{y}=1)$ is given as
\begin{equation}
\begin{split}
     \frac{1}{2\pi^2\sigma^2} &\exp\left(-\frac{(x_1 - 1)^2 + (x_2 - 0.5)^2 + 1}{2\sigma^2}\right)\times\\ \int_0^{\pi} &\exp\left(\frac{-(x_1 - 1)\cos(t) - (x_2 - 0.5)\sin(t)}{\sigma^2}\right) dt   
\end{split}
\end{equation}

The density functions of the two moons are derived from the convolution of uniform distributions along the two distinct curves with a Gaussian distribution. The uniform distributions are defined along these curves, while the Gaussian random variable with zero mean and standard deviation $\sigma$ in both $x_1$ and $x_2$ directions. The convolution, representing the combined effect of the uniform and Gaussian distributions, is expressed as a line integral. Numerical integration methods provide an approximation of the integral for specific values of $x_1$ and $x_2$.

Besides the original distribution, we can also analytically express the convolved distribution (\cref{fig:convolved}b) of Moons. Given a function $f(x_1, x_2)$ and a vicinity function $v(x_1, x_2)$, the convolution $(p*v)(x_1, x_2)$ is defined as:
\begin{equation}
    (p*v)(x_1, x_2) = \int_{-\infty}^{\infty} \int_{-\infty}^{\infty} p(u_1, u_2) \cdot v(x_1 - u_1, x_2 - u_2) \, du_1 \, du_2
\end{equation}
Where we substitute $p(x_1, x_2)$ by either $p(\mathbf{x}=(x_1, x_2)|\textnormal{y}=0)$ or $p(\mathbf{x}=(x_1, x_2)|\textnormal{y}=1)$. And, $v(x_1, x_2)$ is the vicinity function (rectangular function) as follows.
\begin{equation}
    v(x_1, x_2) = \begin{cases} 
    \frac{1}{4\epsilon}, & \text{if } -\epsilon \leq x_1 \leq \epsilon \text{ and } -\epsilon \leq x_2 \leq \epsilon \\
    0, & \text{otherwise} 
    \end{cases}
\end{equation}
Substituting these into the convolution integral, and considering that $v(x_1, x_2)$ is non-zero only in a specific range, we have
\begin{equation}
\begin{aligned}
    (p(\mathbf{x}&|\textnormal{y}=0)*v)(x_1, x_2)=\\ & \frac{1}{4\epsilon^2} \cdot \frac{1}{2\pi^2\sigma^2} \int_{x_1-\epsilon}^{x_1+\epsilon} \int_{x_2-\epsilon}^{x_2+\epsilon} \exp\left(-\frac{u_1^2 + u_2^2 + 1}{2\sigma^2}\right)\\&\quad \left[ \int_0^{\pi} \exp\left(\frac{u_1\cos(t) + u_2\sin(t)}{\sigma^2}\right) dt \right] du_1 \, du_2 \\
    (p(\mathbf{x}&|\textnormal{y}=1)*v)(x_1, x_2)=\\ & \frac{1}{4\epsilon^2} \cdot \frac{1}{2\pi^2\sigma^2}\int_{x_1-\epsilon}^{x_1+\epsilon} \int_{x_2-\epsilon}^{x_2+\epsilon} \exp\left(-\frac{(u_1 - 1)^2 + (u_2 - 0.5)^2 + 1}{2\sigma^2}\right)\\&\quad \left[ \int_0^{\pi} \exp\left(\frac{-(u_1 - 1)\cos(t) - (u_2 - 0.5)\sin(t)}{\sigma^2}\right) dt \right] du_1 \, du_2 
\end{aligned}
\end{equation}
This expression represents a triple integral. To solve it numerically, we simplify the integrands and use numerical integration techniques such as \texttt{tplquad} in Python's \texttt{scipy.integrate} module.

\paragraph{Chan} This is a binary classification data set~\cite{chen2023evaluating}, whose distribution function is numerically depicted in \cref{fig:convolved}c. We have $P(\textnormal{y}=0) = P(\textnormal{y}=1) = 1/2$. Like Moons, the input of this data set is in two dimensions, and thus we use the same neural network as the classifier.

\subsection{Classifier Performance Metrics}
We now provide the typical existing performance evaluation metric for classifiers. Given a classifier and test data set, we can compute its standard accuracy and certified robust accuracy as in \cref{tab:metrics}. They essentially represent how we evaluate the standard performance and robustness, respectively, of a classifier.

\begin{table}[t]
\centering
\caption{To evaluate a model $h$ on any input data from some finite data set $D_\mathrm{f}$, we assume $|D_\mathrm{f}|$ is the number of testing samples.
}
\label{tab:metrics}
\footnotesize
\begin{tabular}{p{0.2\linewidth}p{0.38\linewidth}p{0.38\linewidth}}
\toprule
Metric & Formula & Meaning \\ \midrule
Vanilla Accuracy & $\frac{1}{|D_\mathrm{f}|}\sum_{(\bm{x}, y)\in D_\mathrm{f}}$   $\Big(\bm{1}_{h(\bm{x})= y}\Big) $ & The probability that the model's prediction is correct for an input from the data distribution $D$. \\ \midrule

Certified Robust Accuracy & $\frac{1}{|D_\mathrm{f}|}\sum_{(\bm{x}, y)\in D_\mathrm{f}} $ $ \quad\left(\bm{1}_{\operatorname{Vrob}\big(h, \bm{x}, y; \mathbb{V}_{\bm{x}}\big)}\right)$ & The probability that the classifier can be certified robust (consistency and a correct centre prediction), for an input from the data distribution $D$. \\ \midrule

\rowcolor{mygray}\multicolumn{3}{l}{$\operatorname{\mathbb{I}}(\phi)$ a function that returns 1 if $\phi$ is satisfied and 0 otherwise}\\\bottomrule
\end{tabular}
\end{table}

\subsection{Tools, Packages, and Hardware}
The computational tools required for our method are detailed here. We employ numerical quadrature integration, \emph{e.g.}, using \texttt{dblquad} from \texttt{scipy.integrate} for handling double integration tasks. For convolution processes, we utilize FFT-based convolution, implemented through \texttt{scipy. signal. fftconvolve}. Additionally, all neural networks in our study are developed using PyTorch. The training and testing of these networks are conducted on a Tesla T4 GPU.

\end{document}